\theoremstyle{plain}
\newtheorem{theorem}{Theorem}[section]
\newtheorem{proposition}[theorem]{Proposition}
\theoremstyle{definition}
\theoremstyle{remark}
\newacronym{rl}{RL}{Reinforcement Learning}
\newacronym{drl}{deep RL}{Deep Reinforcement Learning}
\newacronym{dl}{DL}{Deep Learning}
\newacronym{mdp}{MDP}{Markov Decision Process}
\newacronym{sr}{SR}{successor representation}
\newacronym{sf}{SF}{Successor Features}
\newacronym{td}{TD}{Temporal Difference}
\newacronym{gpi}{GPI}{Generalized Policy Improvement}
\newacronym{gvf}{GVFs}{Generalized Value Functions}
\newacronym{usfa}{USFAs}{Universal Successor Feature Approximators}
\newacronym{uvf}{UVFs}{Universal Value Functions}
\newacronym{ts}{TS}{Thomson Sampling}
\newacronym{ecdp}{ECDP}{Extended Controlled Markov Process}
\newacronym{et}{ET}{Eligibility Traces}
\newacronym{gru}{GRU}{Gated Recurrent Unit}
\newacronym{mlp}{MLP}{Multi-Layer Perceptron}
\newacronym{cmp}{CMP}{Controlled Markov Process}
\newacronym{rnn}{RNN}{Recurrent Neural Network}
\newacronym{sm}{SM}{Successor Measures}
\newacronym{msve}{MSVE}{Maximum State-Visitation Entropy}
\newacronym{dpg}{DPG}{Deterministic Policy Gradient}
\newcommand{\alg}{$\eta$$\psi$-Learning}
\icmltitlerunning{Maximum State Entropy Exploration using Predecessor and Successor Representations}
\begin{document}

\twocolumn[
    \icmltitle{Maximum State Entropy Exploration using Predecessor\\and Successor Representations}          
    \icmlsetsymbol{equal}{*}
    \begin{icmlauthorlist}
    \icmlauthor{Arnav Kumar Jain}{mila,udem}
    \icmlauthor{Lucas Lehnert}{fair}
    \icmlauthor{Irina Rish}{mila,udem}
    \icmlauthor{Glen Berseth}{mila,udem}
    \end{icmlauthorlist}
    \icmlaffiliation{mila}{Mila---Quebec AI Institute}
    \icmlaffiliation{udem}{University de Montreal}
    \icmlaffiliation{fair}{Meta AI, FAIR}
    \icmlcorrespondingauthor{Arnav Kumar Jain}{arnav-kumar.jain@mila.quebec}
    \icmlkeywords{Machine Learning, ICML}
    \vskip 0.3in
]
\printAffiliationsAndNotice{}

\begin{abstract}
Animals have a developed ability to explore that aids them in important tasks such as locating food, exploring for shelter, and finding misplaced items.
These exploration skills necessarily track where they have been so that they can plan for finding items with relative efficiency.
Contemporary exploration algorithms often learn a less efficient exploration strategy because they either condition only on the current state or simply rely on making random open-loop exploratory moves.
In this work, we propose \alg{}, a method to learn efficient exploratory policies by conditioning on past episodic experience to make the next exploratory move.
Specifically, \alg{} learns an exploration policy that maximizes the entropy of the state visitation distribution of a single trajectory.
Furthermore, we demonstrate how variants of the predecessor representation and successor representations can be combined to predict the state visitation entropy. 
Our experiments demonstrate the efficacy of \alg{} to strategically explore the environment and maximize the state coverage with limited samples. 
\end{abstract}

\begin{figure*}[t]
\centering
\subfigure[]{
\includegraphics[width=.2\textwidth]{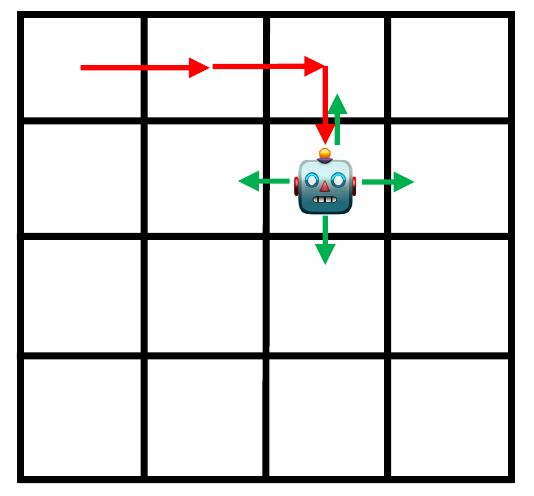}
\label{fig:motivation_trace} 
}
\subfigure[]{
\includegraphics[width=.2\textwidth]{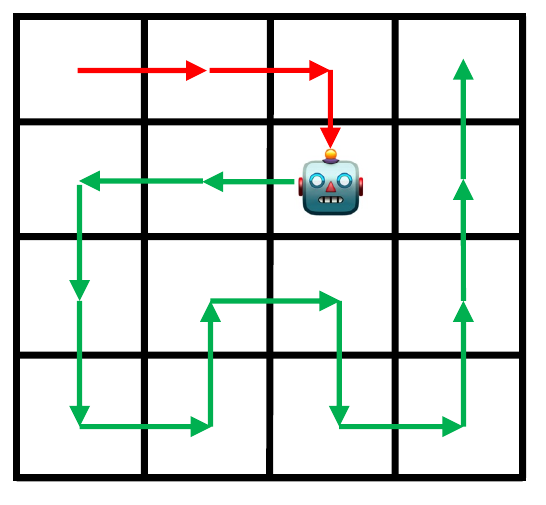}
\label{fig:motivation_traj1} 
}
\subfigure[]{
\includegraphics[width=.2\textwidth]{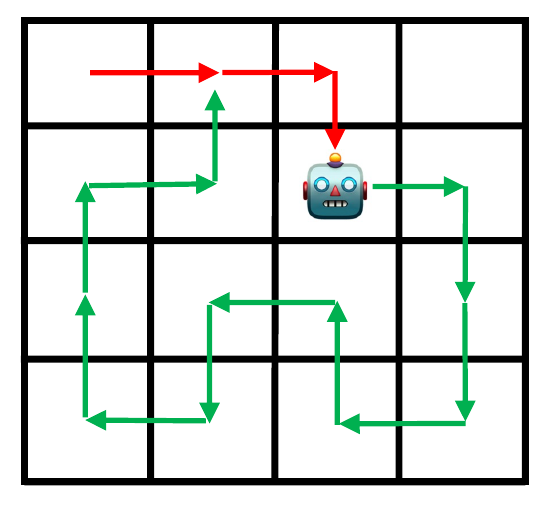}
\label{fig:motivation_traj2}
}
\subfigure[]{
\includegraphics[width=.207\textwidth]{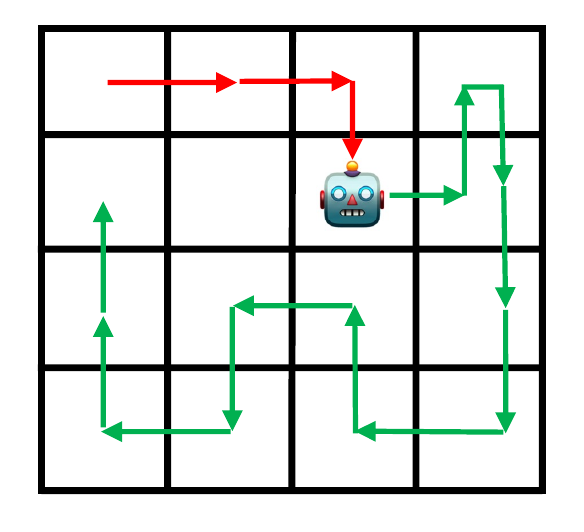}
\label{fig:motivation_traj3}
}
\caption{
Consider a 4$\times$4 grid-world for illustration. (a) Agent starts at the top left corner and takes a few actions(red arrows show the trace), (b) optimal trajectory covering the grid (green arrows), (c) sub-optimal trajectory where agent visits a previously observed state in the last step, (d) another sub-optimal trajectory an observed state is visited at an earlier step.
}
\label{fig:motivation}
\end{figure*}

\section{Introduction}
Animals and humans are very efficient at exploration compared to their data-hungry algorithms counterparts~\citep{litman2005curiosity,kidd2015psychology, schmidhuber1991curiousmodel, schmidhuber2009simple, clark2018nice, ecoffet2019go,vinyals2017starcraft,schmidhuber2010formal}.
For instance, when misplacing an item, a person will methodically search through the environment to locate the lost item. 
To efficiently explore, an intelligent agent must consider past interactions to decide what to explore next and avoid re-visiting previously encountered locations to find rewarding states as fast as possible.
Consequently, the agent needs to reason over potentially long interaction sequences, a space that grows exponentially with the sequence length.
Here, assuming that all the information the agent needs to act is contained in the current state is impossible~\citep{mutti2022maximumentropyexploration}.

In this paper, we present \alg{}, an algorithm to learn an exploration policy that methodically searches through a task.
This is accomplished by maximizing the entropy of the state visitation distribution of a single finite-length trajectory.
This focus on optimizing the state visitation distribution of a single trajectory distinguishes our approach from prior work on exploration methods that maximize the entropy of the state visitation distribution~\cite{pmlr-v97-hazan19a, tarbouriech2019active,lee2019efficient,mutti2020intrinsically,guo2021geometric}.
For example~\citet{pmlr-v97-hazan19a} focus on learning a Markovian policy---a decision-making strategy that is only conditioned on the current state and does not consider which states have been explored before.
A Markovian policy constrains the agent in its ability to express different exploration policies and typically results in randomizing at uncertain states to maximize the state entropy objective.
While this will lead to uniformly covering the state space in the limit, such behaviors are not favorable for real-world tasks where the agent needs to maximize state coverage with limited number of interactions.  

\autoref{fig:motivation} presents a didactic example to illustrate how an intelligent agent can learn to efficiently explore a $4 \times 4$ grid.
In this example, the agent transitions between different grid cells by selecting one of four actions: \emph{up}, \emph{down}, \emph{left}, and \emph{right}.
To explore optimally, the agent would select actions that maximize the entropy of the visited state distribution of the entire trajectory.
Suppose the agent started its trajectory in the top left corner of the grid (shown in Figure~\ref{fig:motivation_trace}) and has moved to the right twice and made one downward step (indicated by red arrows). 
At this point, the agent has to decide between one of the four actions to further explore the grid.
For example, it could move \emph{left} and follow the green trajectory as outlined in Figure~\ref{fig:motivation_traj1}.
This path would be optimal in this example because every state is visited exactly once and not multiple times.
However, the \emph{top} action would lead to a sub-optimal trajectory as the agent would visit the previous state.
To mitigate sub-optimal exploration, an intelligent agent must keep track of visited states to avoid revisiting states. 
Although taking the \emph{right} action will lead to a novel state in the next step, the overall behavior will be sub-optimal as the agent will have to visit a state twice to explore the entire grid (depicted in Figures~\ref{fig:motivation_traj2} and \ref{fig:motivation_traj3}). 
This further requires an agent to carefully plan and account for the states that would follow after taking an action. 

In this work, we propose \alg{}, an algorithm to compute an exploration strategy that methodically explores within a single finite-length trajectory---as illustrated in Figure~\ref{fig:motivation_traj1}.
\alg{} maintains two state representations: a predecessor representation~\cite{vanhasselt2021expectedeligibilitytraces,bailey2022predecessor} to encode past state visitation frequencies and a Successor Representation (SR)~\cite{dayan1993improving} to predict future state visitation frequencies.
The two representations are used to evaluate at every time step the decision that leads to covering all states as uniformly as possible.
Specifically, for every potential action the agent can take, the SR is combined with the predecessor representation to predict the state visitation distribution for the current trajectory.
Then, the action that results in the highest entropy of this state visitation distribution is selected for exploration.
Furthermore, this exploration policy can be deterministic and does not randomize to achieve its maximum state entropy objective. 

To summarize, the contributions of this work are as follows:
Firstly, we propose a mechanism to combine successor~\citep{dayan1993improving} and predecessor~\citep{vanhasselt2021expectedeligibilitytraces} representations for maximizing the entropy of the state visitation distribution of a finite-length trajectory. 
To the best of our knowledge, this is the first work using the two representations to optimize the state visitation distribution and learn an efficient exploration policy. 
Secondly, we introduce \alg{}, a method that utilizes the combination of two representations to learn deterministic and non-Markovian exploration policies for the finite-sample regime.
Thirdly, we discuss how \alg{} optimizes the entropy-based objective function for both finite and (uncountably) infinite action spaces. 
In Section~\ref{sec:experiments} we demonstrate through empirical experiments that \alg{} achieves optimal coverage within a single finite-length trajectory. 
Moreover, the visualizations presented in Section~\ref{sec:experiments} demonstrate that \alg{} learns an exploration policy that maneuvers through the state space to efficiently explore a task while minimizing the number of times the same state is revisited.

\section{Related Work}
The domain of exploration in \gls{rl} focuses on discovering an agent's environment via intrinsic motivation to accelerate learning optimal policies.
Many of the existing exploration methods seek novelty by using prediction errors
~\cite{pathak2017icm, burda2019exploration,sekar2020planning,stadie2015incentivizing} or pseudo-counts~\cite{strehl2008analysis,bellemare2016cts,machado2020count}. 
However, such methods only add an intrinsic reward signal to improve sample efficiency in a single task setting.
They do not explicitly learn a policy that is designed to efficiently explore a task.
In contrast, we present a method for explicitly learning an efficient exploration strategy by maximizing the entropy of a single trajectory's state visitation distribution. 
We believe many subareas of \gls{rl} can benefit from such efficient exploration behaviors.
Some applications include Meta~\gls{rl}~\cite{finn2017model, zintgraf2019varibad, rakelly2019efficient, liu2021decoupling}, Continual~\gls{rl}~\cite{khetarpal2020towards,lehnert2017advantages}, and Unsupervised \gls{rl}~\cite{laskin2021urlb}. 
For example, in Meta~\gls{rl} an agent needs to first explore to identify which of the previously observed tasks it is in before the agent can start exploiting rewards in the current task.
In this context, VariBAD~\cite{zintgraf2019varibad} maintains a belief over which task the agent is in given the observed interactions.
While \citeauthor{zintgraf2019varibad} argues that a Bayes-optimal policy implements an efficient exploration strategy, we propose a method that explicitly learns an efficient exploration policy, resulting in discovering rewarding states more efficiently than VariBAD (Section~\ref{sec:experiments}).

The core idea behind \alg{} is the use of the predecessor and successor representations to predict the state visitation distribution induced by a non-Markovian policy for a single finite-length trajectory.
Instead of using the successor representation for transfer, lifelong learning, or learning one representation that solve a set of tasks~\citep{barreto2017successor,zhang2017deep,barreto2018transfer,borsa2018universal,ma2018universal,siriwardhana2019vusfa,hansen2019fast,barreto2020fast,lehnert2020successor,lehnert2020reward,abdolshah2021new,touati2021learning}, we use the successor representation to estimate the state visitation distribution and maximize its entropy.
By using the successor representation in this way, the \alg{} does not rely on density models~\cite{pmlr-v97-hazan19a,lee2019efficient}, an explicit transition model~\cite{tarbouriech2019active, mutti2020intrinsically}, or non-parametric estimators such as k-NN~\cite{mutti2021task}.
In the following sections we will discuss how \alg{} learns a deterministic exploration policy and does not rely on randomization techniques~\citep{mutti2021task,lee2019efficient} or mixing multiple policies to manipulate the state visitation distribution~\cite{lee2019efficient,pmlr-v97-hazan19a}.
Moreover, \citet{mutti2022unsupervised} provide a theoretical analysis proving that efficient (zero regret) exploration is possible with a deterministic non-Markovian policy but computing such a policy is NP-hard.
In this context, \alg{} is to our knowledge the first algorithm for computing such an efficient exploration policy.

\section{Maximum state entropy exploration}
\label{sec:background}

We formalize the exploration task as a \gls{cmp}, a quadruple $\mathcal{M}= \langle \mathcal{S}, \mathcal{A}, p, \mu \rangle$ consisting of a (finite) state space $\mathcal{S}$, a (finite) action space $\mathcal{A}$, a transition function $p$ specifying transition probabilities with $p(s,a,s') = \mathbb{P}( s' | s,a )$, and a start state distribution $\mu$ specifying probability of starting a trajectory at state $s$ with $\mu(s)$.
A trajectory is a sequence $\tau = (s_1,a_1,...,a_{h-1},s_h)$ of some length $h$ that can be simulated in a \gls{cmp}.
A policy $\pi$ specifies the probabilities with which an agent selects actions when simulating a trajectory in a \gls{cmp}.
Typically, this policy is conditioned on the task state and specifies the probabilities of selecting a particular action in standard \gls{rl} algorithms like Q-learning~\cite{watkins1992qlearning,sutton2018reinforcement}. 
However, as illustrated in Figure~\ref{fig:motivation}, the past trajectory (shown with red arrows) determines which next action leads to the best exploratory trajectory.
Consequently, we consider policies that are functions of trajectories rather than just states.

The state visitation frequencies of a trajectory $\tau_h=(s_1,a_1,...,a_{h-1},s_h)$ of length $h$ can be formally expressed in a probability vector by first encoding every state $s_t$ as a one-hot bit vector $\pmb{e}_{s_t}$ and then computing the marginal across time steps for a single trajectory $\tau$:    
\begin{equation}
    \pmb{\xi}_{\gamma,\tau} = \sum_{t=1}^h \gamma(t) \pmb{e}_{s_t}. \label{eq:trajectory-visitation-frequency}
\end{equation}
The marginal in \autoref{eq:trajectory-visitation-frequency} uses a \emph{discount function} $\gamma: \mathbb{N} \to [0, 1]$ (where we denote the set of positive integers with $\mathbb{N}$), such that $\sum_{t=1}^h \gamma(t)=1$. 
We note that this use of a discount function is distinct from using a discount factor in common \gls{rl} algorithms such as Q-learning~\cite{watkins1992qlearning} but using a discount function is necessary as we will elaborate in the following section.

In the example in Figure~\ref{fig:motivation}, the optimal exploratory agent would keep a similar visitation frequency for each state, as illustrated in Figure~\ref{fig:motivation_traj1} where the optimal trajectory traverses every state once within the first 15 steps.
For this trajectory the vector $\pmb{\xi}_{\tau,\gamma}$ would encode a uniform probability vector, given $\gamma(t)=\frac{1}{h}$ for any $t$.
In fact, an optimal exploration policy $\pi^*$ maximizes the entropy of marginal of this probability vector and solves the optimization problem
\begin{equation}
    \pi^* \in \arg \max_\pi H \big( \mathbb{E}_{\tau} \big[ \pmb{\xi}_{\gamma,\tau} \big] \big) \label{eq:entropy-objective-pi}
\end{equation}
where the expectation is computed across trajectories that are simulated in a \gls{cmp} and follow the policy $\pi$.\footnote{
    Here, we consider the Shannon entropy $H(\pmb{p}) = -\sum_i \pmb{p}_i \log \pmb{p}_i$, where the summation ranges over the entries of the probability vector $\pmb{p}$.
}
In the remainder of the paper, we will show how optimizing this objective leads to the uniform sweeping behavior illustrated in \autoref{fig:motivation} and the agent learns to maximize the entropy of the state visitation distribution in a single finite length trajectory.
In the following section, we describe how \alg{} optimizes the objective in \autoref{eq:entropy-objective-pi}.

\section{\alg{}}
\label{sec:algo}

To learn an efficient exploration policy, we need to estimate the state visitation history and predict the distribution over future states.
Consider a trajectory $\tau = (s_1,a_1,...,s_{T-1},a_{T-1},s_{T},...a_{h-1},s_{h})$. 
At an intermediary step $T$, we denote the $T-1$-step prefix with $\tau_{:T-1} = (s_1,a_1,...,s_{T-1})$ and the suffix starting at step $T$ with $\tau_{T:}=(s_T,a_T...,a_{h-1},s_{h})$. 
Using this sub-trajectory notation, the discounted state visitation distribution in \autoref{eq:trajectory-visitation-frequency} can be written as
\begin{align}
    \pmb{\xi}_{\gamma,\tau} &= \sum_{t=1}^{T-1} \gamma(t) \pmb{e}_{s_t} + \sum_{t=T}^h \gamma(t) \pmb{e}_{s_t}.
\end{align}
Assuming the scenario presented in Section~\ref{sec:background}, suppose the agent has followed the trajectory $\tau_{:T}$ until time step $T$.
At this time step, the agent needs to decide which action $a_T$ leads to covering the state space as uniformly as possible and maximizes the entropy of the state visitation distribution. 
The expected state visitation distribution for a policy $\pi$ can be expressed by conditioning on the trace $\tau_{:T}$ and a potential action $a_T \in \mathcal{A}$:
\begin{align}
    &\mathbb{E}_{\tau, \pi} \Big[ \pmb{\xi}_{\gamma,\tau} \Big| \tau_{:T}, a_T \Big] \\
    &=\mathbb{E}_{\tau, \pi} \Bigg[ \sum_{t=1}^{T-1} \gamma(t) \pmb{e}_{s_t} + \sum_{t=T}^h \gamma(t) \pmb{e}_{s_t} \Big| \tau_{:T}, a_T \Bigg] \\
    &= \underbrace{\sum_{t=1}^{T-1} \gamma(t) \pmb{e}_{s_t}}_{=\pmb{\eta}(\tau_{:T-1})} + \underbrace{\mathbb{E}_{\tau_{T+1:}, \pi} \Bigg[  \sum_{t=T}^h\gamma(t) \pmb{e}_{s_t} \Bigg| \tau_{:T},a_T \Bigg]}_{=\pmb{\psi}^\pi( \tau_{:T}, a_T )}\label{eq:objective_sf_pf} ,
\end{align}
where the vector $\pmb{\eta} (\tau_{:T-1})$ is a variant of the predecessor representation~\cite{vanhasselt2021expectedeligibilitytraces,bailey2022predecessor} and the vector $\pmb{\psi}^\pi(\tau_{:T},a_T)$ is a variant of the \glsfirst{sr}~\cite{dayan1993improving}.
Splitting the expected state visitation distribution into a vector $\pmb{\eta}$ and $\pmb{\psi}^\pi$ as outlined in \autoref{eq:objective_sf_pf} is possible because we are assuming a discount function $\gamma$ as defined in Section~\ref{sec:background}.
At time step $T$, the two representations 
can be added together to estimate the expected state visitation probability vector.
Simulating the proposed algorithm is analogous to effectively drawing Monte-Carlo samples from the expectation at different steps $T$ to learn a SR and predict the expected visitation frequencies of $\pmb{\xi}_{\gamma,\tau}$.

The predecessor representation vector $\pmb{\eta}(\tau_{:T-1})$ can still be estimated incrementally similarly to the eligibility trace in the TD($\lambda$) algorithm~\cite{sutton1988learning} (but with a different weighting scheme that uses the discount function $\gamma$).
While the definition of the vector $\pmb{\eta} (\tau_{:T})$ is similar to the definition of eligibility traces~\citep[Chapter 12]{sutton2018reinforcement}, we do not use the predecessor trace for multi-step TD updates to learn more efficiently.
Instead, the vector $\pmb{\eta}(\tau_{:T-1})$ estimates the visitation frequencies of past states---the predecessor states---to decide which states to explore next.

While the predecessor representation can be maintained using an update rule because the observed states are known, predicting future state visitation frequencies is more challenging.  
A potential solution is to exhaustively search through all possible sequences of trajectories starting from the current state.
This is computationally infeasible and requires a dynamics model of the environment.
Moreover, such a model is not always available, and learning them is prone to errors that compound for longer horizons~\cite{ross2011reduction,janner2019trust}.
To this end, we learn a variant of the \glsfirst{sr}, which predicts the expected frequencies of visiting future or successor states under a policy~\cite{dayan1993improving}. 
In contrast to previous methods which learn \gls{sr} conditioned on the current state~\cite{dayan1993improving, barreto2017successor}, \alg{} conditions the \gls{sr} on the entire history of states
\begin{equation}
    \pmb{\psi}^\pi( \tau_{:T}, a_T ) = \mathbb{E}_{\tau_{T+1:}, \pi} \left[  \sum_{t=T}^h\gamma(t) \pmb{e}_{s_t} \middle| \tau_{:T},a_T \right]. \label{eq:successor_representation}
\end{equation}
Conditioning the SR on the trajectory $\tau_{:T}$ is necessary because policy $\pi$ is also conditioned on $\tau_{:T}$ and therefore the visitation frequencies of future states depend on $\tau_{:T}$. 
Moreover, the expectation evaluates all possible trajectories after taking action $a_T$ at time $T$ and following policy $\pi$ afterward. 
We discuss in~\autoref{app:architecture}
how the SR vectors are approximated using a recurrent neural network.

We saw in \autoref{eq:objective_sf_pf} that the predecessor representation and successor representation can be combined to predict the state visitation distribution for a policy $\pi$ and a trajectory-prefix $\tau_{:T}$. 
\alg{} uses the estimated state visitation distribution to compute the entropy term in the objective defined in \autoref{eq:entropy-objective-pi}.
Specifically, the utility function $Q_\text{expl}$ approximates the entropy of the state visitation distribution for an action $a_T$ at every time step.
By defining 
\begin{align}
    \label{eq:Q-function_entropy}
    Q_\text{expl}(\tau_{:T}, a_T) = H\left(\pmb{\eta} (\tau_{:T-1}) + \pmb{\psi}^{\pi} (\tau_{:T}, a_T)\right),
\end{align}
the action that leads to the highest state visitation entropy is assigned the highest utility value.
Notably, the proposed Q-function differs from prior methods using the \gls{sr}, as we neither factorize the reward function~\cite{barreto2017successor,barreto2018transfer,borsa2018universal,touati2021learning} nor use the \gls{sr} for learning a state abstraction~\cite{lehnert2020successor}. 
Optimizing the exploration Q-function stated in~\autoref{eq:Q-function_entropy} is challenging as it depends on the \gls{sr} that itself depends on the policy $\pi$ which changes during learning.
Furthermore,~\citet{guo2021geometric} that the Shannon-entropy based objective is difficult to directly optimize using gradient-based methods (due to the log term inside an expectation)~\citep{lee2019efficient,pong2019skew,guo2021geometric}. 
In contrast, we outline in the following paragraphs how the entropy objective in \autoref{eq:Q-function_entropy} can be directly optimized using either a Q-learning~\citep{watkins1992qlearning} style method or a method based on the Deterministic Policy Gradient~\cite{silver2014deterministic} framework for finite and infinite action spaces.

\paragraph{Finite action space framework}
Since the predecessor representation is fixed for a given trajectory $\tau_{:T}$, optimizing the Q-function defined in~\autoref{eq:Q-function_entropy} boils down to predicting the optimal \gls{sr} for a given history $\tau_{:T}$. 
Similar to prior Successor Feature learning methods~\citep{barreto2017successor,barreto2018transfer,lehnert2017advantages}, we approximate the \gls{sr} with a parameterized and differentiable function $\pmb{\psi}_{\pmb{\theta}}$ and use a loss based on a one-step temporal difference error.
Given an approximation $\pmb{\psi}_{\pmb{\theta}}$, the \gls{sr} prediction target is obtained by the current state embedding and \gls{sr} of the optimal action at the next step:
\begin{equation}
    \pmb{y}(\tau_{:T+1},a'_{T+1}) = \pmb{e}_{s_{
T}} + \gamma(T+1) \pmb{\psi_{\theta}} (\tau_{:T+1}, a'_{T+1}),
\end{equation}
where $\tau_{:T+1}$ is obtained by adding action $a_T$ and the received next state $s_{T+1}$ to the trajectory $\tau_{:T}$. 
Analogous to Q-Learning~\cite{watkins1992qlearning}, the optimal action at the next step is specified by
\begin{align}
    \label{eq:optimal_policy}
    a'_{T+1}=\arg\max_{a \in \mathcal{A}} Q_{\text{expl}} (\tau_{:T+1}, a). 
\end{align}
Being greedy with respect to these entropy values to estimate the target leads to improving the policy $\pi$ which in turn finds the \gls{sr} for the optimal policy.
(Appendix~\ref{app:convergence} presents a convergence analysis of this method in a dynamic programming setting.)
Then, the function $\pmb{\psi}_{\pmb{\theta}}$ is optimized using gradient descent on the loss function $\mathcal{L}_{SR}$, given by
\begin{align}
    \label{eq:loss_sr}
    \mathcal{L}_{SR} &=  || \pmb{\psi}_{\pmb{\theta}} (\tau_{:T}, a_{T})  - \pmb{y}(\tau_{:T+1},a'_{T+1}) ||^2,
\end{align}
where gradients are not propagated through the target $\pmb{y}(\tau_{:T+1},a'_{T+1})$.
Finally, the optimal policy selects actions greedily with respect to the $Q_{\text{expl}}$ function. 
Algorithm~\ref{algo:epl_qlearning} describes the training procedure for the proposed variant for finite action spaces.

\paragraph{Infinite action space framework} 
Directly obtaining gradient estimates of objective defined in \autoref{eq:entropy-objective-pi} is challenging because of the expectation term in the non-linear logarithmic term. 
Previous approaches have used alternative optimization methods~\cite{lee2019efficient,pong2019skew} or resorted to a simpler noise-contrastive objective function~\cite{guo2021geometric}. 
In contrast with prior algorithms, we derived an \alg{} variant for infinite action spaces that optimizes an actor-critic architecture using policy gradients to maximise the maximum state entropy objective.
The agent uses an actor-critic architecture where actor and critic networks are conditioned on the history of visited states.
The actor~$\pi_{\mu}(\tau)$ is parameterized with a parameter vector $\mu$ and is a deterministic map from a trajectory to an action.
The critic predicts the utility function conditioned on a given trajectory and action.
Similar to the finite action space variant, the predecessor representation is fixed for a given trajectory and the network has to predict \gls{sr}~$\pmb{\psi}_{\theta}(\tau, a)$ for a given trajectory~$\tau$ and action~$a$.
Here, the target value of \gls{sr} to update the critic is specified by the action obtained using the policy~$a'_{T+1}=\pi_{\mu}(\tau_{:T+1})$, given by
\begin{align}
    \pmb{y} = \pmb{e}_{s_T} + \gamma(T+1)\pmb{\psi}_{\theta}(\tau_{:T+1}, a'_{T+1}).
\end{align}
The critic is trained with the same loss function $\mathcal{L}_{SR}$ as defined in \autoref{eq:loss_sr}, where the gradients are not propagated through the target.
The actor is optimized to maximize the estimated utility function (\autoref{eq:entropy-objective-pi}).
Since the actor is deterministic, policy gradients are computed using an adaptation of the deterministic policy gradient theorem~\cite{silver2014deterministic}.
Because the actor network has no dependency on predecessor trace (which depends on the observed states only), gradients for the actor parameters are obtained by applying chain rule leading to the following gradient of~\autoref{eq:Q-function_entropy} (please refer to Proposition \autoref{prop:pg} for more details on the derivation):
\begin{align}
    \label{eq:q_dpg}
    \nabla_{\mu} J(\pi_{\mu}) = \mathbb{E}_{\tau\sim\rho} \Big[ \sum_i z_i \nabla_{\mu} \pi_{\mu} (\tau) \nabla_a \pmb{\psi}_i(\tau,a) \big|_{a=\pi_{\mu}(\tau)}\Big],
\end{align}
where $z_i=-\log[\pmb{\eta}(\tau_{:-1})_i + \pmb{\psi}(\tau, \pi_{\mu}(\tau))_i] - 1$ is the multiplicative factor for state $i$, and depends on the expected probability of visiting a state.
The factor $z_i$ can take values between between -1 and $\infty$, with positive values of high magnitude for states with low visitation probability and negative values for state with high probability of visitation.
Thus, the factor $z_i$ scales the policy gradients to maximize the entropy of the state visitation distribution.
In Algorithm~\autoref{algo:epl_pg} we outline the training procedure for the infinite action space framework.
 
\section{Experiments}
\begin{figure*}[!t]
    \centering
    \includegraphics[width=0.99\textwidth]{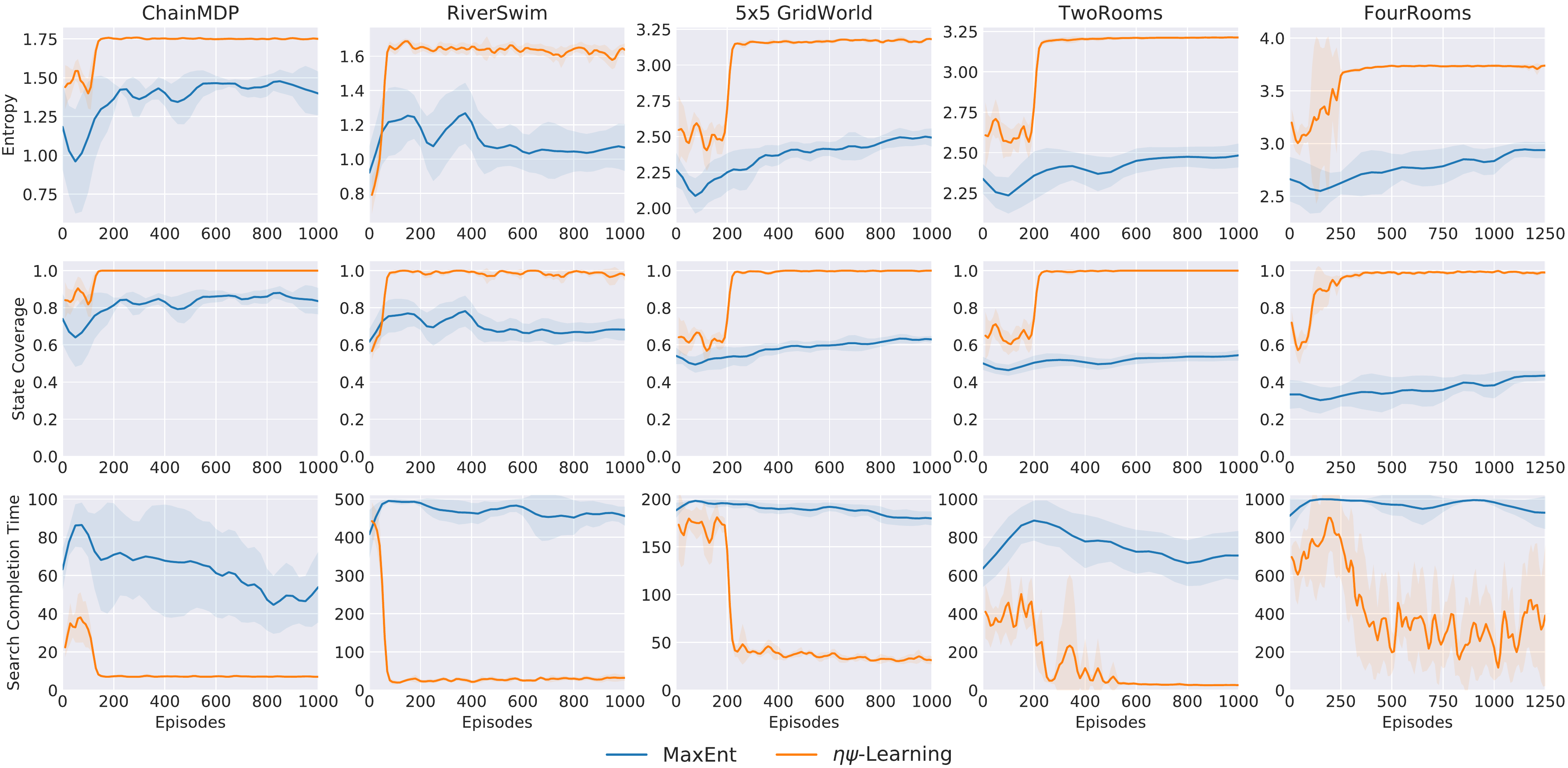}
    \caption{
        Comparison of \alg{} and MaxEnt~\cite{pmlr-v97-hazan19a} on three metrics: 
        \emph{Entropy} (top row) of state visitation distribution, \emph{State Coverage} (middle row) representing the fraction of state space visited, and \emph{Search Completion Time} (bottom row) denoting steps taken to cover the state space.
    } 
    \label{fig:results}
\end{figure*}
\label{sec:experiments}
To analyze and test if \alg{} learns an efficient exploration policy, we evaluate the proposed method on a set of discrete and continuous control tasks.
In these experiments, we are recording a set of different performance measures to access if the resulting exploration policies do in fact maximize the entropy of the state visitation distribution and if most states are explored by \alg{}. 
The following results demonstrate that by maintaining a predecessor representation and conditioning the \gls{sr} on the simulated trajectory prefix, the \alg{} agent learns a deterministic exploration policy that minimizes the number of interactions needed to visit all states.
In addition, we expand our method to continuous control tasks and demonstrate how \alg{} can efficiently explore in complex domains with infinite action space.
Our method is ideal for searching out rewards in difficult sparse reward environments. 
We compare \alg{}, which learns to explore an environment as efficiently as possible, to recent meta-\gls{rl} methods~\cite{zintgraf2019varibad} that aim to learn how to optimally explore an environment to infer the rewarding or goal state.

\textbf{Environments}: 
We experiment with different tasks with both finite and infinite action spaces.
The \textbf{ChainMDP} and \textbf{RiverSwim}~\cite{strehl2008analysis} is a six-state chain where the transitions are deterministic or stochastic, respectively.
In these tasks a Markovian policy cannot cover the state space uniformly because the agent has to pace back and forth along the chain, visiting the same state multiple times.
For the RiverSwim environment , a non-stationary policy, a policy that is a function of the time step, cannot optimally cover all states because non-determinism in the transitions can place the agent into different states at random.
Furthermore, we include the \textbf{$5 \times 5$ grid world} example used in \autoref{fig:motivation}.
We also test \alg{} on two harder exploration tasks---the \textbf{TwoRooms} and \textbf{FourRooms} domains, which are challenging because it is not possible to obtain exact uniform visitation distribution due to the wall structure.
For continuous control tasks, we evaluate on Reacher and Pusher tasks, where the agent has a robotic-arm with multiple joints.
The task is to maximize the entropy over the locations covered by the fingertip of the robotic-arm. 
\autoref{app:environments} provides more details on the environments and the hyper-parameters are reported in \autoref{app:hyperparameter}.

\textbf{Prior Methods}: 
To our knowledge, existing work focusses on learning Markovian exploration policies~\citep{mutti2022maximumentropyexploration}.
We use MaxEnt~\cite{pmlr-v97-hazan19a} as a baseline agent for our experiments because this method optimizes a similar maximum entropy objective as \alg{}---with the difference that MaxEnt learns a Markovian policy and resorts to randomization to obtain a close to uniform state visitation distribution.
A comparison with SMM~\citep{lee2019efficient} and MEPOL~\citep{mutti2021task} is skipped because these methods optimize a similar maximum entropy objective with a Markovian policy and cannot express the same exploration behaviour as \alg{}.
\begin{figure*}[t]
\centering
\subfigure[]{
\includegraphics[width=.3\textwidth]{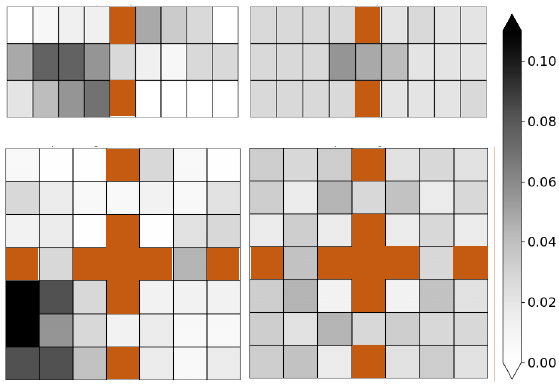}
\label{fig:vis_heatmap_2envs}
}
\subfigure[]{
\includegraphics[width=.66\textwidth]{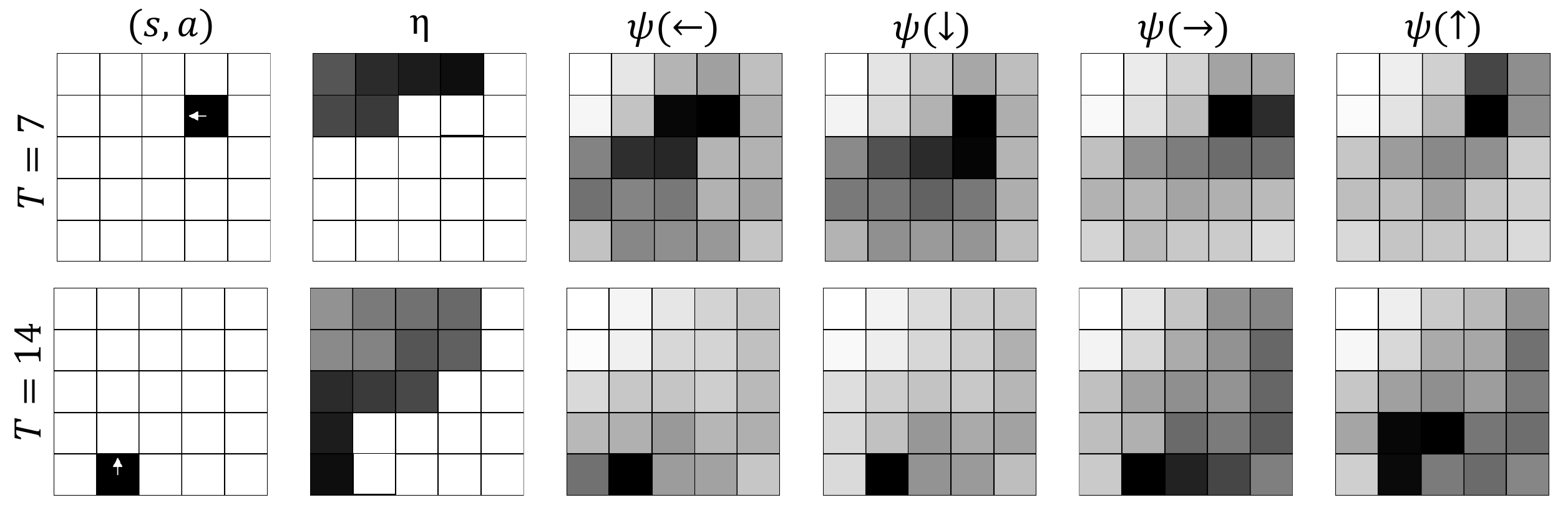}
\label{fig:sf_map}
}

\caption{(a) Heatmap of state visitation distribution by unrolling a trajectory using MaxEnt (left) and \alg{} (right) on TwoRooms and FourRooms environments. (b) Visualization of learned \gls{sr} of each action (denoted with $\psi(.)$) at time steps $T=7,14$ for a trajectory using \alg{} on $5\times5$ grid. $(s, a)$ denotes the state (black) and action taken by the agent (direction of white arrow), $\pmb{\eta}$ is the predecessor representation till time $T$ (higher values have darker shade)}.
\label{fig:ablations}
\end{figure*}

\textbf{Evaluation Metrics}: 
\textit{Entropy} measures a method's ability to have similar visitation frequency for each state in the state space.
This signifies the gap between the observed state visitation distribution and the optimal distribution that maximizes the entropy term.
The Entropy metric is computed using the objective defined in \autoref{eq:entropy-objective-pi} over a single trajectory generated by the agent. 
A constant discount factor of $\gamma(t) = \frac{1}{h}$ is used to obtain the state visitation distribution during evaluation.
An agent can maximize this measure without actually exploring all states of an environment---a desirable property for \gls{rl} where rewards may be sparse and hidden in complex to-reach states.
We record the \textit{state coverage} metric which represents the fraction of states in the environment visited by the agent at least once within a trajectory.
Lastly, we want agents to explore the state space efficiently.
For example, an optimal agent can sweep through the gridworld presented in \autoref{fig:motivation} with a search completion time of $15$ steps~( Figure~\ref{fig:motivation_traj1} shows an optimal trajectory).
The \emph{search completion time} metric measures the steps taken to discover each state in the environment.
All results report the mean performance computed over 5 random seeds with 95\% confidence intervals shading.

\textbf{Quantitative Results}:
\autoref{fig:results} presents the results obtained for \alg{} and MaxEnt~\cite{pmlr-v97-hazan19a}. 
Compared to MaxEnt, which learns a Markovian policy, \alg{} achieves 20-50\% higher entropy.
This indicates that the MaxEnt algorithm by learning a Markovian and stochastic policy was randomizing at certain states which lead to sub-optimal behaviors.
The performance gain was more prominent in grid-based environments because the MaxEnt agent was visiting some states more frequently than others, which are harder to explore efficiently.
Furthermore, high \textit{entropy} values suggest that the agent visits states with similar frequency in the environment and does not get stuck at a particular state.
We attribute this behavior of \alg{} to the proposed Q-function that picks action to visit different states and maximize the objective.
\begin{figure*}[t]
\centering
\subfigure[]{
\includegraphics[width=.37\textwidth]{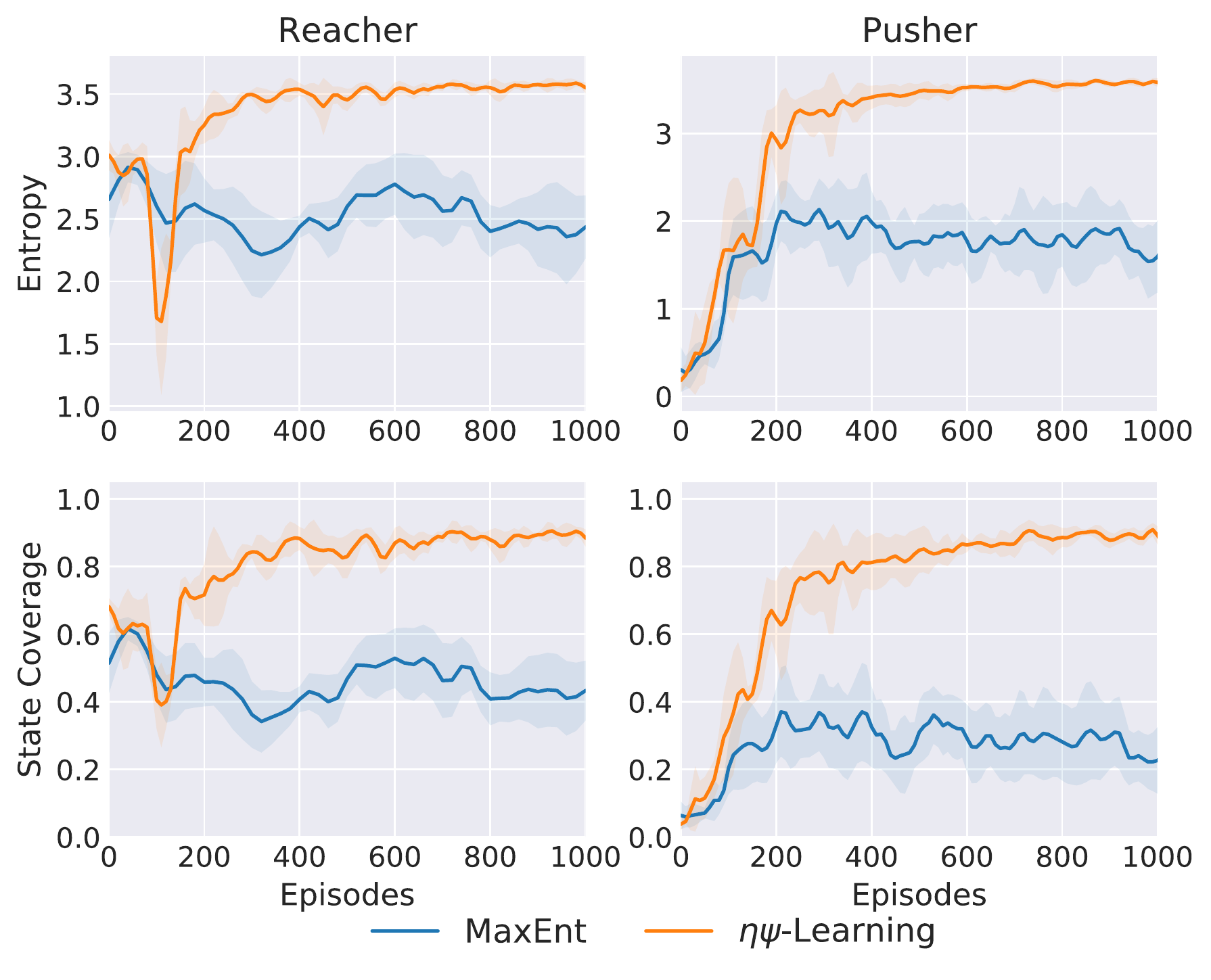}
\label{fig:infnty_action_results}
}
\subfigure[]{
\includegraphics[width=.35\textwidth]{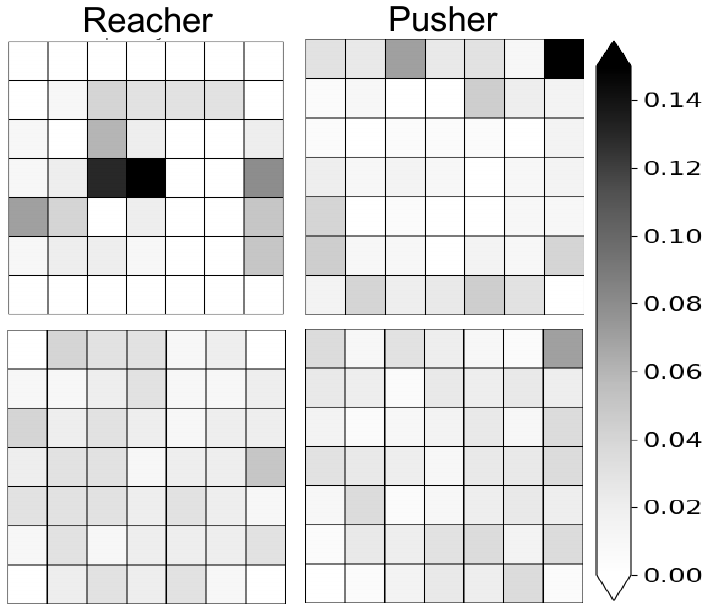}
\label{fig:infnty_action_heatmap}
}
\subfigure[]{
\includegraphics[width=.20\textwidth]{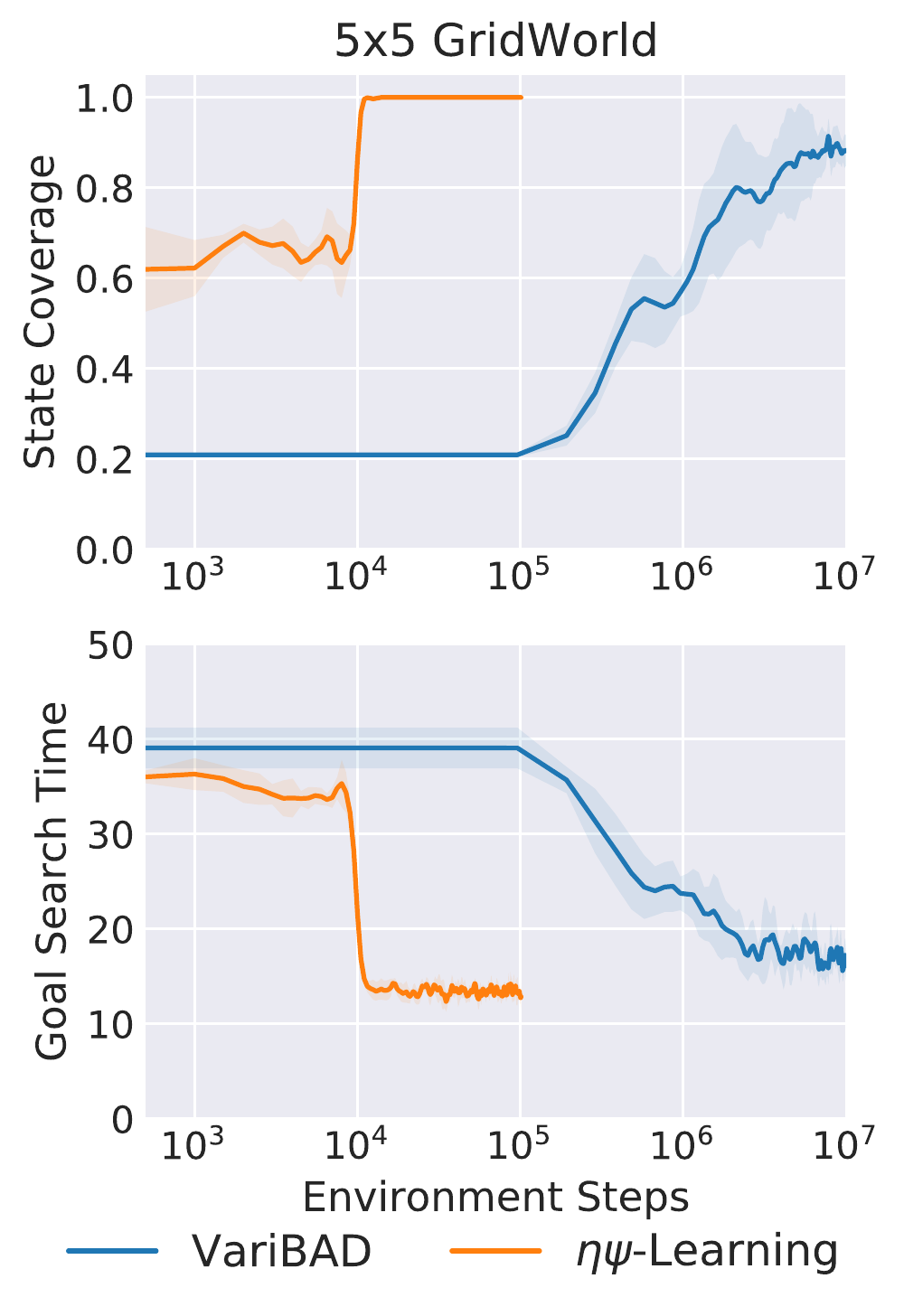}
\label{fig:metarl}
}
\caption{(a) Comparison of \alg{} and MaxEnt~\cite{mutti2022maximumentropyexploration} on Reacher and Pusher environments (b) Heatmaps of state visitation distribution of MaxEnt (top) and \alg{} (bottom), (c) comparison with VariBAD~\cite{zintgraf2019varibad} on State Coverage and Goal Search Time metrics.
}
\label{fig:cont_control_metarl}
\end{figure*}

\autoref{fig:results} also shows that \alg{} achieves optimal state coverage across environments exemplifying that \alg{} while maximizing the entropic measure also learns to cover the state space within a single trajectory. 
However, the baseline MaxEnt was not able to discover all the states in the environment.
MaxEnt was unable to visit all the states in ChainMDP and RiverSwim environments with trajectory length of $20$ and $50$, respectively.
Moreover, the state coverage of MaxEnt was around \textbf{50-60\%} on the harder TwoRooms and FourRooms tasks, where the agent has to navigate between different rooms and is required to remember the order of visiting different rooms.
These results reveal that Markovian policy limits an agent's ability to maximize the state coverage in a task. 
The proposed method also outperformed the baseline on the \textit{search completion time} metric across all environments.
Notably, on ChainMDP, the $5 \times 5$ gridworld, and TwoRooms environments, \alg{} converged within 500 episodes. 
However, \alg{} did not achieve optimal \textit{search exploration time} on the FourRooms environment as it missed a spot in a room and resorted to it later in the episode.

To further understand the gains offered by \alg{}, we visualized the state visitation distributions on a single trajectory (in Figure~\ref{fig:vis_heatmap_2envs}).
On the TwoRooms environment, \alg{} had similar density on the states in both the rooms, where the density is more around the center.
This is because the agent was sweeping across rooms alternatively. 
\alg{} showed a better-visited state distribution on the FourRooms environment with more distributed density across states.
However, MaxEnt was not visiting all states and also visited a few states more frequently than others, elucidating the lower performance on entropy and state coverage.
We further visualized the learned \gls{sr} to see if \alg{} learns a \gls{sr} for the optimal exploration policy through generalized policy improvement~\cite{barreto2017successor}.
For this analysis, we sampled a trajectory on $5\times5$ gridworld.
Figure~\ref{fig:sf_map} reports the heatmaps of the learned \gls{sr} vector for each action at different steps in the trajectory.
We observe that the \gls{sr} vector for each action has lower density on the states already observed in the trace.
This exemplifies that the learned \gls{sr} captures the history of the visited states that further aids in taking actions to maximize the entropy of state visitation distribution. 
We also study if MaxEnt can show similar gains when trained with a recurrent policy (Appendix~\ref{app:maxent_recurrent}) and compared the agents when evaluated across multiple trajectories~(Appendix~\ref{app:multiple_trajs}). 

\textbf{Continuous Control tasks}: 
The efficacy of \alg{} is further tested on environments with infinite action space. Figure~\ref{fig:infnty_action_results} reports the Entropy and State Coverage metric on Reacher and Pusher environments, where \alg{} outperformed the baseline MaxEnt on both metrics.
The gains are more significant on the Pusher environment which is a harder task because of multiple hinges in the robotic-arm.
The proposed method~\alg{} achieves close to \textbf{90\%} coverage in both environments, whereas the MaxEnt had only close to \textbf{50\%} and \textbf{40\%} coverage on Reacher and Pusher environments, respectively.
In Figure~\ref{fig:infnty_action_heatmap}, heatmaps of the state visitation density for a single trajectory shows that \alg{}~has more uniformly distributed density compared to MaxEnt.
The Pusher environment has high density at the top-right corner of the grid denoting the time taken by the agent to move the fingertip to other locations from the starting state. 
Notably, the proposed method \alg{}~has lower density at the starting state and we believe that conditioning on the history of states is guiding the agent to move the robotic-arm to other locations to maximize the entropy over the state space.
In Appendix~\ref{app:reacher_pusher_trajs}, a visualization of a rolled-out trajectory generated using \alg{} is presented showing that the agent learns to efficiently maneuver the fingertip of the robotic-arm to different locations in the environment.

\textbf{Comparison with Meta-\gls{rl}}:
A question central to Meta-\gls{rl}~\cite{finn2017model,zintgraf2019varibad,liu2021decoupling} is the ability to quickly explore a task and find rewarding states in complex tasks where the rewards are sparse.
In this context,~\citet{zintgraf2019varibad} present the VariBAD method, which maintains a belief over different tasks to infer the optimal policy---leading to efficient exploration behaviour that enables the agent to discover rewarding states quickly.
Similar to VariBAD, the predecessor representation $\pmb{\eta}$ in \alg{} keeps track of which states have been explored and which states are not explored.
Figure~\ref{fig:metarl} compares the exploration behaviour of \alg{} to VariBAD:
In terms of the State Coverage and Goal Search Time metric, \alg{} outperforms VariBAD significantly because \alg{} is designed to optimize the entropy of the state visitation frequencies of a single trajectory instead of performing Bayes-adaptive inference across a task space.
We refer the reader to \autoref{app:metarl} for more details. 

\section{Discussion}
To explore efficiently, an intelligent agent needs to consider past episodic experiences to decide on the next exploratory action.
We demonstrate how the predecessor representation---an encoding of past state visitations---can be combined with the successor representation---a prediction of future state visitations---to learn efficient exploration policies that maximize the state-visitation-distribution entropy.
Across a set of different environments, we illustrate how \alg{} consistently reasons across different trajectories to explore near optimally---a task that is NP-hard~\cite{mutti2022maximumentropyexploration}. 

To the best of our knowledge, \alg{}~is the first algorithm that combines predecessor and successor representations to estimate the state visitation distribution. 
Furthermore, \alg{} learns a non-Markovian policy and can therefore express exploration behavior not afforded by existing methods~\cite{pmlr-v97-hazan19a,lee2019efficient,mutti2021task,guo2021geometric}.
To further increase the applicability of \alg{}, one interesting direction of future research is to extend \alg{} to POMDP environments where states are either partially observable or complex such as images. This is challenging because the agent has to learn an embedding of state observations that capture only the relevant components of the state space to maximize the entropy. 
We believe a promising approach would be to leverage the idea of \glsfirst{sm}~\cite{touati2021learning,touati2022does,farebrother2023proto} which have shown promising results when scaled to high-dimensional inputs like images. 
Furthermore, the presented approach can be also used for designing other algorithms that control the state visitation distribution. 
An application is goal-conditioned RL, where the agents need to minimize the KL divergence between visitation distribution of policy and goal-distribution~\cite {lee2019efficient,pong2019skew}. 
Another application is Safe RL~\citep{wagener2021safe} where agents receive a penalty upon visiting unsafe states to avoid observing them. 

We study reinforcement learning, which aims to enable autonomous agents to acquire search behaviors. 
This study of developing exploration behaviors in reinforcement learning is guided by a fundamental curiosity about the nature of autonomous learning; it has a number of potential practical applications and broad implications. 
First, autonomous exploration for pre-training in general, can enable autonomous agents to acquire useful skills with less human intervention and effort, potentially improving the feasibility of learning-enabled robotic systems. 
Second, the practical applications that we illustrate, such as applications to continuous environments, can accelerate reinforcement learning in certain settings. 
Specific to our method, finite length entropy maximization may also in the future offer a useful tool for search and rescue, by equipping agents with an objective that causes them to explore a space systematically to locate lost items. 
However, these types of reinforcement learning methods also have a number of uncertain broad implications: agents that explore the environment and attempt to acquire open-ended skills may carry out unexpected or unwanted behaviors, and would require suitable safety mechanisms of their own during training.

\section*{Acknowledgements}
The authors would like to thank Harley Wiltzer for his valuable feedback and discussions. The writing of the paper also benefited from discussions with Darshan Patil, Chen Sun, Mandana Samiei, Vineet Jain, and Arushi Jain. AJ and IR acknowledge the support from Canada CIFAR AI Chair Program and from the Canada Excellence Research Chairs (CERC) program. The authors are also grateful to Mila (mila.quebec) and Digital Research Alliance of Canada for computing resources.

\bibliographystyle{icml2023}

\begin{thebibliography}{64}
\providecommand{\natexlab}[1]{#1}
\providecommand{\url}[1]{\texttt{#1}}
\expandafter\ifx\csname urlstyle\endcsname\relax
  \providecommand{\doi}[1]{doi: #1}\else
  \providecommand{\doi}{doi: \begingroup \urlstyle{rm}\Url}\fi

\bibitem[Abdolshah et~al.(2021)Abdolshah, Le, George, Gupta, Rana, and
  Venkatesh]{abdolshah2021new}
Abdolshah, M., Le, H., George, T.~K., Gupta, S., Rana, S., and Venkatesh, S.
\newblock A new representation of successor features for transfer across
  dissimilar environments.
\newblock In \emph{International Conference on Machine Learning}, pp.\  1--9.
  PMLR, 2021.

\bibitem[Bailey \& Mattar(2022)Bailey and Mattar]{bailey2022predecessor}
Bailey, D. and Mattar, M.
\newblock Predecessor features.
\newblock \emph{arXiv preprint arXiv:2206.00303}, 2022.

\bibitem[Barreto et~al.(2017)Barreto, Dabney, Munos, Hunt, Schaul, van Hasselt,
  and Silver]{barreto2017successor}
Barreto, A., Dabney, W., Munos, R., Hunt, J.~J., Schaul, T., van Hasselt,
  H.~P., and Silver, D.
\newblock Successor features for transfer in reinforcement learning.
\newblock \emph{Advances in neural information processing systems}, 30, 2017.

\bibitem[Barreto et~al.(2018)Barreto, Borsa, Quan, Schaul, Silver, Hessel,
  Mankowitz, Zidek, and Munos]{barreto2018transfer}
Barreto, A., Borsa, D., Quan, J., Schaul, T., Silver, D., Hessel, M.,
  Mankowitz, D., Zidek, A., and Munos, R.
\newblock Transfer in deep reinforcement learning using successor features and
  generalised policy improvement.
\newblock In \emph{International Conference on Machine Learning}, pp.\
  501--510. PMLR, 2018.

\bibitem[Barreto et~al.(2020)Barreto, Hou, Borsa, Silver, and
  Precup]{barreto2020fast}
Barreto, A., Hou, S., Borsa, D., Silver, D., and Precup, D.
\newblock Fast reinforcement learning with generalized policy updates.
\newblock \emph{Proceedings of the National Academy of Sciences}, 117\penalty0
  (48):\penalty0 30079--30087, 2020.

\bibitem[Bellemare et~al.(2016)Bellemare, Srinivasan, Ostrovski, Schaul,
  Saxton, and Munos]{bellemare2016cts}
Bellemare, M., Srinivasan, S., Ostrovski, G., Schaul, T., Saxton, D., and
  Munos, R.
\newblock Unifying count-based exploration and intrinsic motivation.
\newblock In \emph{Advances in Neural Information Processing Systems}, pp.\
  1471--1479, 2016.

\bibitem[Bengio et~al.(1994)Bengio, Simard, and Frasconi]{bengio1994learning}
Bengio, Y., Simard, P., and Frasconi, P.
\newblock Learning long-term dependencies with gradient descent is difficult.
\newblock \emph{IEEE transactions on neural networks}, 5\penalty0 (2):\penalty0
  157--166, 1994.

\bibitem[Borsa et~al.(2018)Borsa, Barreto, Quan, Mankowitz, Munos, Van~Hasselt,
  Silver, and Schaul]{borsa2018universal}
Borsa, D., Barreto, A., Quan, J., Mankowitz, D., Munos, R., Van~Hasselt, H.,
  Silver, D., and Schaul, T.
\newblock Universal successor features approximators.
\newblock \emph{arXiv preprint arXiv:1812.07626}, 2018.

\bibitem[Burda et~al.(2019)Burda, Edwards, Storkey, and
  Klimov]{burda2019exploration}
Burda, Y., Edwards, H., Storkey, A., and Klimov, O.
\newblock Exploration by random network distillation.
\newblock In \emph{International Conference on Learning Representations}, 2019.
\newblock URL \url{https://openreview.net/forum?id=H1lJJnR5Ym}.

\bibitem[Cho et~al.(2014)Cho, Van~Merri{\"e}nboer, Gulcehre, Bahdanau,
  Bougares, Schwenk, and Bengio]{cho2014gru}
Cho, K., Van~Merri{\"e}nboer, B., Gulcehre, C., Bahdanau, D., Bougares, F.,
  Schwenk, H., and Bengio, Y.
\newblock Learning phrase representations using rnn encoder-decoder for
  statistical machine translation.
\newblock \emph{arXiv preprint arXiv:1406.1078}, 2014.

\bibitem[Christopher(1992)]{watkins1992qlearning}
Christopher, J.
\newblock Watkins and peter dayan.
\newblock \emph{Q-Learning. Machine Learning}, 8\penalty0 (3):\penalty0
  279--292, 1992.

\bibitem[Clark(2018)]{clark2018nice}
Clark, A.
\newblock A nice surprise? predictive processing and the active pursuit of
  novelty.
\newblock \emph{Phenomenology and the Cognitive Sciences}, 17\penalty0
  (3):\penalty0 521--534, 2018.

\bibitem[Dayan(1993)]{dayan1993improving}
Dayan, P.
\newblock Improving generalization for temporal difference learning: The
  successor representation.
\newblock \emph{Neural computation}, 5\penalty0 (4):\penalty0 613--624, 1993.

\bibitem[Ecoffet et~al.(2019)Ecoffet, Huizinga, Lehman, Stanley, and
  Clune]{ecoffet2019go}
Ecoffet, A., Huizinga, J., Lehman, J., Stanley, K.~O., and Clune, J.
\newblock Go-explore: a new approach for hard-exploration problems.
\newblock \emph{arXiv preprint arXiv:1901.10995}, 2019.

\bibitem[Farebrother et~al.(2023)Farebrother, Greaves, Agarwal, Lan, Goroshin,
  Castro, and Bellemare]{farebrother2023proto}
Farebrother, J., Greaves, J., Agarwal, R., Lan, C.~L., Goroshin, R., Castro,
  P.~S., and Bellemare, M.~G.
\newblock Proto-value networks: Scaling representation learning with auxiliary
  tasks.
\newblock \emph{arXiv preprint arXiv:2304.12567}, 2023.

\bibitem[Finn et~al.(2017)Finn, Abbeel, and Levine]{finn2017model}
Finn, C., Abbeel, P., and Levine, S.
\newblock Model-agnostic meta-learning for fast adaptation of deep networks.
\newblock In \emph{International conference on machine learning}, pp.\
  1126--1135. PMLR, 2017.

\bibitem[Fujimoto et~al.(2018)Fujimoto, van Hoof, and Meger]{fujimoto2018td3}
Fujimoto, S., van Hoof, H., and Meger, D.
\newblock Addressing function approximation error in actor-critic methods.
\newblock \emph{arXiv preprint arXiv:1802.09477}, 2018.

\bibitem[Guo et~al.(2021)Guo, Azar, Saade, Thakoor, Piot, Pires, Valko,
  Mesnard, Lattimore, and Munos]{guo2021geometric}
Guo, Z.~D., Azar, M.~G., Saade, A., Thakoor, S., Piot, B., Pires, B.~A., Valko,
  M., Mesnard, T., Lattimore, T., and Munos, R.
\newblock Geometric entropic exploration.
\newblock \emph{arXiv preprint arXiv:2101.02055}, 2021.

\bibitem[Haarnoja et~al.(2018)Haarnoja, Zhou, Abbeel, and
  Levine]{haarnoja2018soft}
Haarnoja, T., Zhou, A., Abbeel, P., and Levine, S.
\newblock Soft actor-critic: Off-policy maximum entropy deep reinforcement
  learning with a stochastic actor.
\newblock In \emph{International conference on machine learning}, pp.\
  1861--1870. PMLR, 2018.

\bibitem[Hafner et~al.(2020)Hafner, Lillicrap, Norouzi, and
  Ba]{hafner2020dreamerv2}
Hafner, D., Lillicrap, T., Norouzi, M., and Ba, J.
\newblock Mastering atari with discrete world models.
\newblock \emph{arXiv preprint arXiv:2010.02193}, 2020.

\bibitem[Hansen et~al.(2019)Hansen, Dabney, Barreto, Van~de Wiele,
  Warde-Farley, and Mnih]{hansen2019fast}
Hansen, S., Dabney, W., Barreto, A., Van~de Wiele, T., Warde-Farley, D., and
  Mnih, V.
\newblock Fast task inference with variational intrinsic successor features.
\newblock \emph{arXiv preprint arXiv:1906.05030}, 2019.

\bibitem[Hazan et~al.(2019)Hazan, Kakade, Singh, and
  Van~Soest]{pmlr-v97-hazan19a}
Hazan, E., Kakade, S., Singh, K., and Van~Soest, A.
\newblock Provably efficient maximum entropy exploration.
\newblock In Chaudhuri, K. and Salakhutdinov, R. (eds.), \emph{Proceedings of
  the 36th International Conference on Machine Learning}, volume~97 of
  \emph{Proceedings of Machine Learning Research}, pp.\  2681--2691. PMLR,
  09--15 Jun 2019.
\newblock URL \url{https://proceedings.mlr.press/v97/hazan19a.html}.

\bibitem[Janner et~al.(2019)Janner, Fu, Zhang, and Levine]{janner2019trust}
Janner, M., Fu, J., Zhang, M., and Levine, S.
\newblock When to trust your model: Model-based policy optimization.
\newblock \emph{Advances in Neural Information Processing Systems}, 32, 2019.

\bibitem[Khetarpal et~al.(2020)Khetarpal, Riemer, Rish, and
  Precup]{khetarpal2020towards}
Khetarpal, K., Riemer, M., Rish, I., and Precup, D.
\newblock Towards continual reinforcement learning: A review and perspectives.
\newblock \emph{arXiv preprint arXiv:2012.13490}, 2020.

\bibitem[Kidd \& Hayden(2015)Kidd and Hayden]{kidd2015psychology}
Kidd, C. and Hayden, B.~Y.
\newblock The psychology and neuroscience of curiosity.
\newblock \emph{Neuron}, 88\penalty0 (3):\penalty0 449--460, 2015.

\bibitem[Kingma \& Ba(2014)Kingma and Ba]{kingma2014adam}
Kingma, D.~P. and Ba, J.
\newblock Adam: A method for stochastic optimization.
\newblock \emph{arXiv preprint arXiv:1412.6980}, 2014.

\bibitem[Laskin et~al.(2021)Laskin, Yarats, Liu, Lee, Zhan, Lu, Cang, Pinto,
  and Abbeel]{laskin2021urlb}
Laskin, M., Yarats, D., Liu, H., Lee, K., Zhan, A., Lu, K., Cang, C., Pinto,
  L., and Abbeel, P.
\newblock Urlb: Unsupervised reinforcement learning benchmark.
\newblock \emph{arXiv preprint arXiv:2110.15191}, 2021.

\bibitem[Lee et~al.(2019)Lee, Eysenbach, Parisotto, Xing, Levine, and
  Salakhutdinov]{lee2019efficient}
Lee, L., Eysenbach, B., Parisotto, E., Xing, E., Levine, S., and Salakhutdinov,
  R.
\newblock Efficient exploration via state marginal matching.
\newblock \emph{arXiv preprint arXiv:1906.05274}, 2019.

\bibitem[Lehnert \& Littman(2020)Lehnert and Littman]{lehnert2020successor}
Lehnert, L. and Littman, M.~L.
\newblock Successor features combine elements of model-free and model-based
  reinforcement learning.
\newblock \emph{J. Mach. Learn. Res.}, 21:\penalty0 196--1, 2020.

\bibitem[Lehnert et~al.(2017)Lehnert, Tellex, and
  Littman]{lehnert2017advantages}
Lehnert, L., Tellex, S., and Littman, M.~L.
\newblock Advantages and limitations of using successor features for transfer
  in reinforcement learning.
\newblock \emph{arXiv preprint arXiv:1708.00102}, 2017.

\bibitem[Lehnert et~al.(2020)Lehnert, Littman, and Frank]{lehnert2020reward}
Lehnert, L., Littman, M.~L., and Frank, M.~J.
\newblock Reward-predictive representations generalize across tasks in
  reinforcement learning.
\newblock \emph{PLoS computational biology}, 16\penalty0 (10):\penalty0
  e1008317, 2020.

\bibitem[Lillicrap et~al.(2015)Lillicrap, Hunt, Pritzel, Heess, Erez, Tassa,
  Silver, and Wierstra]{lillicrap2015ddpg}
Lillicrap, T.~P., Hunt, J.~J., Pritzel, A., Heess, N., Erez, T., Tassa, Y.,
  Silver, D., and Wierstra, D.
\newblock Continuous control with deep reinforcement learning.
\newblock \emph{arXiv preprint arXiv:1509.02971}, 2015.

\bibitem[Litman(2005)]{litman2005curiosity}
Litman, J.
\newblock Curiosity and the pleasures of learning: Wanting and liking new
  information.
\newblock \emph{Cognition \& emotion}, 19\penalty0 (6):\penalty0 793--814,
  2005.

\bibitem[Liu et~al.(2021)Liu, Raghunathan, Liang, and Finn]{liu2021decoupling}
Liu, E.~Z., Raghunathan, A., Liang, P., and Finn, C.
\newblock Decoupling exploration and exploitation for meta-reinforcement
  learning without sacrifices.
\newblock In \emph{International conference on machine learning}, pp.\
  6925--6935. PMLR, 2021.

\bibitem[Ma et~al.(2018)Ma, Wen, and Bengio]{ma2018universal}
Ma, C., Wen, J., and Bengio, Y.
\newblock Universal successor representations for transfer reinforcement
  learning.
\newblock \emph{arXiv preprint arXiv:1804.03758}, 2018.

\bibitem[Machado et~al.(2020)Machado, Bellemare, and Bowling]{machado2020count}
Machado, M.~C., Bellemare, M.~G., and Bowling, M.
\newblock Count-based exploration with the successor representation.
\newblock In \emph{Proceedings of the AAAI Conference on Artificial
  Intelligence}, volume~34, pp.\  5125--5133, 2020.

\bibitem[Mutti \& Restelli(2020)Mutti and Restelli]{mutti2020intrinsically}
Mutti, M. and Restelli, M.
\newblock An intrinsically-motivated approach for learning highly exploring and
  fast mixing policies.
\newblock In \emph{Proceedings of the AAAI Conference on Artificial
  Intelligence}, volume~34, pp.\  5232--5239, 2020.

\bibitem[Mutti et~al.(2021)Mutti, Pratissoli, and Restelli]{mutti2021task}
Mutti, M., Pratissoli, L., and Restelli, M.
\newblock Task-agnostic exploration via policy gradient of a non-parametric
  state entropy estimate.
\newblock In \emph{Proceedings of the AAAI Conference on Artificial
  Intelligence}, volume~35, pp.\  9028--9036, 2021.

\bibitem[Mutti et~al.(2022{\natexlab{a}})Mutti, De~Santi, and
  Restelli]{mutti2022maximumentropyexploration}
Mutti, M., De~Santi, R., and Restelli, M.
\newblock The importance of non-markovianity in maximum state entropy
  exploration.
\newblock In Chaudhuri, K., Jegelka, S., Song, L., Szepesvari, C., Niu, G., and
  Sabato, S. (eds.), \emph{Proceedings of the 39th International Conference on
  Machine Learning}, volume 162 of \emph{Proceedings of Machine Learning
  Research}, pp.\  16223--16239. PMLR, 17--23 Jul 2022{\natexlab{a}}.
\newblock URL \url{https://proceedings.mlr.press/v162/mutti22a.html}.

\bibitem[Mutti et~al.(2022{\natexlab{b}})Mutti, Mancassola, and
  Restelli]{mutti2022unsupervised}
Mutti, M., Mancassola, M., and Restelli, M.
\newblock Unsupervised reinforcement learning in multiple environments.
\newblock In \emph{Proceedings of the AAAI Conference on Artificial
  Intelligence}, volume~36, pp.\  7850--7858, 2022{\natexlab{b}}.

\bibitem[Pathak et~al.(2017)Pathak, Agrawal, Efros, and Darrell]{pathak2017icm}
Pathak, D., Agrawal, P., Efros, A.~A., and Darrell, T.
\newblock Curiosity-driven exploration by self-supervised prediction.
\newblock In \emph{Proceedings of the IEEE Conference on Computer Vision and
  Pattern Recognition Workshops}, pp.\  16--17, 2017.

\bibitem[Patil et~al.(2023)Patil, Rahimi-Kalahroudi, Nekoei, Gottipati,
  Samsami, Gupta, Poddar, Zholus, Hashemzadeh, Zhao, and Chandar]{Patil2023}
Patil, D., Rahimi-Kalahroudi, A., Nekoei, H., Gottipati, S.~K., Samsami, M.~R.,
  Gupta, K., Poddar, S., Zholus, A., Hashemzadeh, M., Zhao, X., and Chandar, S.
\newblock Rlhive.
\newblock \url{https://github.com/chandar-lab/RLHive}, 2023.

\bibitem[Pong et~al.(2019)Pong, Dalal, Lin, Nair, Bahl, and
  Levine]{pong2019skew}
Pong, V.~H., Dalal, M., Lin, S., Nair, A., Bahl, S., and Levine, S.
\newblock Skew-fit: State-covering self-supervised reinforcement learning.
\newblock \emph{arXiv preprint arXiv:1903.03698}, 2019.

\bibitem[Rakelly et~al.(2019)Rakelly, Zhou, Finn, Levine, and
  Quillen]{rakelly2019efficient}
Rakelly, K., Zhou, A., Finn, C., Levine, S., and Quillen, D.
\newblock Efficient off-policy meta-reinforcement learning via probabilistic
  context variables.
\newblock In \emph{International conference on machine learning}, pp.\
  5331--5340. PMLR, 2019.

\bibitem[Ross et~al.(2011)Ross, Gordon, and Bagnell]{ross2011reduction}
Ross, S., Gordon, G., and Bagnell, D.
\newblock A reduction of imitation learning and structured prediction to
  no-regret online learning.
\newblock In \emph{Proceedings of the fourteenth international conference on
  artificial intelligence and statistics}, pp.\  627--635. JMLR Workshop and
  Conference Proceedings, 2011.

\bibitem[Schmidhuber(1991)]{schmidhuber1991curiousmodel}
Schmidhuber, J.
\newblock Curious model-building control systems.
\newblock In \emph{[Proceedings] 1991 IEEE International Joint Conference on
  Neural Networks}, pp.\  1458--1463. IEEE, 1991.

\bibitem[Schmidhuber(2009)]{schmidhuber2009simple}
Schmidhuber, J.
\newblock Simple algorithmic theory of subjective beauty, novelty, surprise,
  interestingness, attention, curiosity, creativity, art, science, music,
  jokes.
\newblock \emph{Journal of the Society of Instrument and Control Engineers},
  48\penalty0 (1):\penalty0 21--32, 2009.

\bibitem[Schmidhuber(2010)]{schmidhuber2010formal}
Schmidhuber, J.
\newblock Formal theory of creativity, fun, and intrinsic motivation
  (1990--2010).
\newblock \emph{IEEE transactions on autonomous mental development}, 2\penalty0
  (3):\penalty0 230--247, 2010.

\bibitem[Sekar et~al.(2020)Sekar, Rybkin, Daniilidis, Abbeel, Hafner, and
  Pathak]{sekar2020planning}
Sekar, R., Rybkin, O., Daniilidis, K., Abbeel, P., Hafner, D., and Pathak, D.
\newblock Planning to explore via self-supervised world models.
\newblock In \emph{ICML}, 2020.

\bibitem[Silver et~al.(2014)Silver, Lever, Heess, Degris, Wierstra, and
  Riedmiller]{silver2014deterministic}
Silver, D., Lever, G., Heess, N., Degris, T., Wierstra, D., and Riedmiller, M.
\newblock Deterministic policy gradient algorithms.
\newblock In \emph{International conference on machine learning}, pp.\
  387--395. Pmlr, 2014.

\bibitem[Siriwardhana et~al.(2019)Siriwardhana, Weerasakera, Matthies, and
  Nanayakkara]{siriwardhana2019vusfa}
Siriwardhana, S., Weerasakera, R., Matthies, D.~J., and Nanayakkara, S.
\newblock Vusfa: Variational universal successor features approximator, 2019.

\bibitem[Stadie et~al.(2015)Stadie, Levine, and
  Abbeel]{stadie2015incentivizing}
Stadie, B.~C., Levine, S., and Abbeel, P.
\newblock Incentivizing exploration in reinforcement learning with deep
  predictive models.
\newblock \emph{arXiv preprint arXiv:1507.00814}, 2015.

\bibitem[Strehl \& Littman(2008)Strehl and Littman]{strehl2008analysis}
Strehl, A.~L. and Littman, M.~L.
\newblock An analysis of model-based interval estimation for markov decision
  processes.
\newblock \emph{Journal of Computer and System Sciences}, 74\penalty0
  (8):\penalty0 1309--1331, 2008.

\bibitem[Sutton(1988)]{sutton1988learning}
Sutton, R.~S.
\newblock Learning to predict by the methods of temporal differences.
\newblock \emph{Machine learning}, 3\penalty0 (1):\penalty0 9--44, 1988.

\bibitem[Sutton \& Barto(2018)Sutton and Barto]{sutton2018reinforcement}
Sutton, R.~S. and Barto, A.~G.
\newblock \emph{Reinforcement learning: An introduction}.
\newblock MIT press, 2018.

\bibitem[Tarbouriech \& Lazaric(2019)Tarbouriech and
  Lazaric]{tarbouriech2019active}
Tarbouriech, J. and Lazaric, A.
\newblock Active exploration in markov decision processes.
\newblock In \emph{The 22nd International Conference on Artificial Intelligence
  and Statistics}, pp.\  974--982. PMLR, 2019.

\bibitem[Touati \& Ollivier(2021)Touati and Ollivier]{touati2021learning}
Touati, A. and Ollivier, Y.
\newblock Learning one representation to optimize all rewards.
\newblock \emph{Advances in Neural Information Processing Systems},
  34:\penalty0 13--23, 2021.

\bibitem[Touati et~al.(2022)Touati, Rapin, and Ollivier]{touati2022does}
Touati, A., Rapin, J., and Ollivier, Y.
\newblock Does zero-shot reinforcement learning exist?
\newblock \emph{arXiv preprint arXiv:2209.14935}, 2022.

\bibitem[van Hasselt et~al.(2021)van Hasselt, Madjiheurem, Hessel, Silver,
  Barreto, and Borsa]{vanhasselt2021expectedeligibilitytraces}
van Hasselt, H., Madjiheurem, S., Hessel, M., Silver, D., Barreto, A., and
  Borsa, D.
\newblock Expected eligibility traces.
\newblock In \emph{Proceedings of the AAAI Conference on Artificial
  Intelligence}, volume~35, pp.\  9997--10005, 2021.

\bibitem[Vinyals et~al.(2017)Vinyals, Ewalds, Bartunov, Georgiev, Vezhnevets,
  Yeo, Makhzani, K{\"u}ttler, Agapiou, Schrittwieser,
  et~al.]{vinyals2017starcraft}
Vinyals, O., Ewalds, T., Bartunov, S., Georgiev, P., Vezhnevets, A.~S., Yeo,
  M., Makhzani, A., K{\"u}ttler, H., Agapiou, J., Schrittwieser, J., et~al.
\newblock Starcraft ii: A new challenge for reinforcement learning.
\newblock \emph{arXiv preprint arXiv:1708.04782}, 2017.

\bibitem[Wagener et~al.(2021)Wagener, Boots, and Cheng]{wagener2021safe}
Wagener, N.~C., Boots, B., and Cheng, C.-A.
\newblock Safe reinforcement learning using advantage-based intervention.
\newblock In \emph{International Conference on Machine Learning}, pp.\
  10630--10640. PMLR, 2021.

\bibitem[Yu et~al.(2017)Yu, Gong, Zhong, and Shan]{yu2017unsupervised}
Yu, Y., Gong, Z., Zhong, P., and Shan, J.
\newblock Unsupervised representation learning with deep convolutional neural
  network for remote sensing images.
\newblock In \emph{International conference on image and graphics}, pp.\
  97--108. Springer, 2017.

\bibitem[Zhang et~al.(2017)Zhang, Springenberg, Boedecker, and
  Burgard]{zhang2017deep}
Zhang, J., Springenberg, J.~T., Boedecker, J., and Burgard, W.
\newblock Deep reinforcement learning with successor features for navigation
  across similar environments.
\newblock In \emph{2017 IEEE/RSJ International Conference on Intelligent Robots
  and Systems (IROS)}, pp.\  2371--2378. IEEE, 2017.

\bibitem[Zintgraf et~al.(2019)Zintgraf, Shiarlis, Igl, Schulze, Gal, Hofmann,
  and Whiteson]{zintgraf2019varibad}
Zintgraf, L., Shiarlis, K., Igl, M., Schulze, S., Gal, Y., Hofmann, K., and
  Whiteson, S.
\newblock Varibad: A very good method for bayes-adaptive deep rl via
  meta-learning.
\newblock \emph{arXiv preprint arXiv:1910.08348}, 2019.

\end{thebibliography}

\newpage
\onecolumn
\appendix

\section{Convergence Analysis}
\label{app:convergence}

To gain a deeper understanding why the \alg{} converges to a maximum entropy policy, we consider in this section a simplified dynamic programming variant in Algorithm~\ref{algo:epl-dynamic-programming}.
Note that the \alg{} estimates the SR for a finite CMP for a finite horizon length $h$.
Consequently, the trajectory-action conditioned SR $\psi^\pi(\tau_{:T},a)$ and exploration policy $\pi$ can be stored in an exponentially large but finite look-up table.
Furthermore, with every transition an additional state is appended to the trajectory $\tau_{:T}$, meaning the agent cannot loop back to the same trajectory.
Using these two properties, we state a dynamic programming variant of \alg{} in Algorithm~\ref{algo:epl-dynamic-programming} and then prove its convergence to a policy that maximizes the entropy term $H\left( \pmb{\eta}(\tau_{:T-1}) + \pmb{\psi}^\pi(\tau_{:T},a) \right)$ at every time step.

\begin{algorithm}
  \caption{\alg{}: Dynamic Programming Framework}
  \label{algo:epl-dynamic-programming}
    \begin{algorithmic}[1]
    \FORALL{$\tau_{h}, a$}
        \STATE $\pmb{\psi}^\pi(\tau_{:h},a) \leftarrow \pmb{e}_{ s_{h} }$
    \ENDFOR
    \FOR{$t = h,...,2$}
        \FORALL{$\tau_{:t}, a$}
            \STATE $\pi(\tau_{:t}) \leftarrow \arg \max_{a} H\left( \pmb{\eta}(\tau_{:t-1}) + \pmb{\psi}^{\pi}(\tau_{:t},a) \right)$
            \STATE $\pmb{\psi}^\pi(\tau_{:t-1},a) \leftarrow \pmb{e}_{s_{t-1}} + \gamma(t) \pmb{\psi}^\pi(\tau_{:t}, \pi(\tau_{:t}) )$
        \ENDFOR
    \ENDFOR
    \STATE \textbf{return} $\pi$ such that $\pi(\tau) = \arg \max_a H( \pmb{\eta}(\tau_{:-1}) + \pmb{\psi}(\tau,a) )$.
    \end{algorithmic}
\end{algorithm}

The convergence proof uses the following property of the predecessor trace $\pmb{\eta}$ and SR $\pmb{\psi}^\pi$:
Consider a trajectory $\tau$ which selects action $a_T$ at time step $T$,
then
\begin{align}
    \pmb{\eta}(\tau_{:T-1}) + \pmb{\psi}^\pi( \tau_{:T},a_T ) &= \underbrace{\sum_{t=1}^{T-1} \gamma(t) \pmb{e}_{s_t}}_{=\pmb{\eta}(\tau_{:T-1})} + \underbrace{\mathbb{E}_{\tau_{T+1:}, \pi} \Bigg[  \sum_{t=T}^h\gamma(t) \pmb{e}_{s_t} \Bigg| \tau_{:T},a_T \Bigg]}_{=\pmb{\psi}^\pi( \tau_{:T}, a_T )} &\text{(by Eq.~\eqref{eq:objective_sf_pf})} \nonumber \\
    &= \sum_{t=1}^{T-1} \gamma(t) \pmb{e}_{s_t} + \mathbb{E}_{\tau_{T+1:}, \pi} \Bigg[ \gamma(T)\pmb{e}_{s_T} + \sum_{t=T+1}^h\gamma(t) \pmb{e}_{s_t} \Bigg| \tau_{:T},a_T \Bigg] \\
    &= \sum_{t=1}^{T} \gamma(t) \pmb{e}_{s_t} + \mathbb{E}_{\tau_{T+1:}, \pi} \Bigg[ \sum_{t=T+1}^h\gamma(t) \pmb{e}_{s_t} \Bigg| \tau_{:T},a_T \Bigg] \\
    &= \pmb{\eta}(\tau_{:T}) + \mathbb{E}_{\tau_{T+1:}, \pi} \left[ \pmb{\psi}^\pi( \tau_{:T+1}, \pi(\tau_{:T+1}) ) \middle| \tau_{:T},a_T \right] \label{eq:eta-psi-splitting}
\end{align}
Using this identity, we can prove the convergence of Algorithm~\ref{algo:epl-dynamic-programming}.

\begin{proposition}
    The policy $\pi^*$ returned by Algorithm~\ref{algo:epl-dynamic-programming} is such that for every $t$-step trajectory $\tau_{:t}$ where $t \le h$, 
    \begin{equation}
        \pi^*(\tau_{:t}) \in \arg \max_\pi H( \pmb{\eta}(\tau_{:t-1}) + \pmb{\psi}^\pi(\tau_{:t}, \pi(\tau_{:t})) ).
    \end{equation}
\end{proposition}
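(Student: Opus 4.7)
The plan is to prove the proposition by backward induction on the prefix length $t$, in exact analogy with the backward sweep of Algorithm~\ref{algo:epl-dynamic-programming}. The induction is well-founded because the horizon $h$ is finite and each transition extends the prefix by one symbol, so there is no circularity. The key algebraic tool is the one-step identity \eqref{eq:eta-psi-splitting}, which lets me rewrite $\pmb{\eta}(\tau_{:t-1}) + \pmb{\psi}^\pi(\tau_{:t}, a)$ as $\pmb{\eta}(\tau_{:t}) + \mathbb{E}_{s_{t+1}}[\pmb{\psi}^\pi(\tau_{:t+1}, \pi(\tau_{:t+1}))]$, pushing one time step of the SR's definition into the predecessor term.

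The base case $t = h$ is essentially free: the algorithm initializes $\pmb{\psi}^\pi(\tau_{:h}, a) = \pmb{e}_{s_h}$ for every action, and by the definition of the SR this also matches the true terminal value for every policy. Therefore $H(\pmb{\eta}(\tau_{:h-1}) + \pmb{\psi}^\pi(\tau_{:h}, \pi(\tau_{:h})))$ is independent of $\pi$, and any action (in particular the one the algorithm returns) lies in the argmax.

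For the inductive step at a prefix $\tau_{:t}$ with $t < h$, I assume the claim holds at every longer prefix and fix an arbitrary competitor policy $\pi'$. Applying \eqref{eq:eta-psi-splitting} converts the entropy at step $t$ into the entropy of $\pmb{\eta}(\tau_{:t})$ plus the expected SR at the next prefix, both for $\pi^*$ and for $\pi'$. The inductive hypothesis, applied pointwise in the realized $\tau_{:t+1}$, lets me replace the tail of $\pi'$ by the tail of $\pi^*$ inside the expectation without decreasing the entropy being compared. Finally, the line of the algorithm that sets $\pi^*(\tau_{:t}) \leftarrow \arg\max_a H(\pmb{\eta}(\tau_{:t-1}) + \pmb{\psi}^{\pi^*}(\tau_{:t}, a))$ guarantees that among policies whose tails coincide with $\pi^*$, the action $\pi^*(\tau_{:t})$ achieves the entropy maximum. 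Combining these two moves delivers the inductive conclusion.

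The main obstacle is exactly the step where the per-realization inductive hypothesis is lifted across the expectation over $s_{t+1}$. Because $H$ is a nonlinear functional, pointwise optimality of the inner SR does not automatically imply that the entropy of the \emph{expected} SR is maximized. I expect to handle this by exploiting the fact that, with $\pmb{\eta}(\tau_{:t})$ held fixed, the inductive hypothesis is stated with respect to \emph{the same} anchor vector $\pmb{\eta}(\tau_{:t})$ that also appears on the outside after applying \eqref{eq:eta-psi-splitting}. This alignment of anchors at consecutive steps is, I suspect, precisely why the authors designed the predecessor representation to absorb visited states one at a time; it is the property that makes the Bellman-style argmax composable and ultimately closes the induction.
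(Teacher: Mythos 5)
Your proposal follows the same route as the paper's own proof: backward induction from $t=h$, the splitting identity \eqref{eq:eta-psi-splitting} to absorb one step into the predecessor term, and a Bellman-style argmax to extend the optimal tail by one action; the base case and overall structure match essentially line for line. Where you differ is that you explicitly name the delicate step---lifting the per-realization induction hypothesis across the expectation over $s_{t+1}$ when $H$ is nonlinear---whereas the paper simply asserts that the term inside the expectation is already maximized by $\pi_t$ and moves on. You are right that this is the crux, but your proposed resolution (that the anchor $\pmb{\eta}(\tau_{:t})$ is the same on both sides) does not close it: the induction hypothesis hands you the maximizer of $H\bigl( \pmb{\eta}(\tau_{:t}) + \pmb{\psi}^\pi(\tau_{:t+1}, \pi(\tau_{:t+1})) \bigr)$ for each realized $s_{t+1}$ \emph{separately}, while the quantity to be maximized at step $t$ is $H$ of the \emph{convex combination} of those vectors over $s_{t+1}$, and a pointwise maximizer of a nonlinear (here concave) functional need not maximize the functional of the mixture. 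When transitions are deterministic the expectation collapses to a single term, the interchange is immediate, and both your argument and the paper's go through without further work; for stochastic transitions an additional argument (or a restriction of the claim) is required, and neither your sketch nor the paper supplies one. So your proposal is faithful to the paper's approach and, to its credit, surfaces the one gap that the paper's proof also leaves open.
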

\begin{proof}
    The proof proceeds by induction on the length of an $h$-step trajectory, starting with a length of $h$ and iterating to a length of one.

    \textbf{Induction hypothesis:} 
    We define a sub-sequence optimal policy $\pi_t$ such that for every $k$-step trajectory prefix $\tau_{:k}$ and $t\le k \le h$,
    \begin{equation}
        \pi_t \in \arg \max_\pi H( \pmb{\eta}(\tau_{:{k-1}}) + \pmb{\psi}^{\pi}(\tau_{:k},\pi( \tau_{:k} )) ). \label{eq:induction-hypothesis}
    \end{equation}
    The exploration policy $\pi_t$ is the optimal after executing the first $t$ steps of an $h$-step trajectory $\tau$.
    The goal is to prove that the induction hypothesis in line~\eqref{eq:induction-hypothesis} holds for $t=1$.

    \textbf{Base case:} 
    The base case for $t=h$ holds trivially, because SR does not have a dependency on the policy $\pi$ for an $h$-step trajectory.
    Therefore the policy $\pi$ can output any action for a trajectory sequence of length $h$:
    \begin{equation}
        \max_\pi H( \pmb{\eta}(\tau_{:h-1}) + \pmb{\psi}^\pi(\tau_{:h}, \pi(\tau_{:h})) ) = H( \pmb{\eta}(\tau_{:h-1}) + \gamma(h) \pmb{e}_{s_{h}} ).
    \end{equation}

    \textbf{Induction Step:}
    Suppose the induction hypothesis in line~\eqref{eq:induction-hypothesis} holds for some $t > 1$ and $\pi_t$ is the maximizer of 
    \begin{equation}
        H( \pmb{\eta}(\tau_{:{k-1}}) + \pmb{\psi}^{\pi_t}(\tau_{:k},\pi_t( \tau_{:k} )) ) 
    \end{equation}
    where $t \le k \le h$.
    For time step $t-1$, we have that for some action $a$,
    \begin{align}
        H( \pmb{\eta}(\tau_{:t-2}) + \pmb{\psi}^{\pi_{t}}(\tau_{:t-1}, a) ) 
        &= H( \pmb{\eta}(\tau_{:t-2}) + \gamma(t-1) \pmb{e}_{s_{t-1}} + \mathbb{E} \left[ \pmb{\psi}^{\pi_{t}}(\tau_{:t}, \pi_t(\tau_{:t}) \middle| s_{t-1},a \right] ) \\
        &= H( \pmb{\eta}(\tau_{:t-1}) + \mathbb{E} \left[ \pmb{\psi}^{\pi_{t}}(\tau_{:t}, \pi_t(\tau_{:t})) \middle| s_{t-1},a \right] ) \\
        &= H( \mathbb{E} \left[ \pmb{\eta}(\tau_{:t-1}) + \pmb{\psi}^{\pi_{t}}(\tau_{:t}, \pi_t(\tau_{:t})) \middle| s_{t-1},a \right] ).
    \end{align}
    We note that the term inside the expectation is already maximized by $\pi_t$ (by induction hypothesis).
    If we now set $\pi_{t-1}$ to be equal to $\pi_t$ for every $t$-step or longer trajectory and set
    \begin{equation}
        \pi_{t-1}(\tau_{:t-1}) = \arg \max_a H( \pmb{\eta}(\tau_{:t-2}) + \pmb{\psi}^{\pi_{t}}(\tau_{:t-1}, a) ),
    \end{equation}
    then for $t - 1 \le k \le h$ 
    \begin{equation}
        \pi_{t-1} \in \arg \max_\pi H( \pmb{\eta}(\tau_{:{k-1}}) + \pmb{\psi}^{\pi}(\tau_{:k},\pi( \tau_{:k} )) ).
    \end{equation}
    This completes the proof.
\end{proof}

\section{\alg{}- Policy Gradient}
\label{app:policy_gradient}

Application of Q-Learning based approaches to continuous action space is not easy because finding the greedy action at any time step can be slow to be practical with large, unconstrained function approximators and nontrivial action spaces. 
In this work, we take a similar approach to deterministic policy gradient~\cite{silver2014deterministic} to learn exploratory policies.
The objective remains the same which is to maximize the entropy of state visitation distribution.
However, it is challenging to estimate the gradient where the objective is based on the entropy term. 
Previous works have either used alternate optimization~\cite{pong2019skew,lee2019efficient} or similar objective functions~\cite{guo2021geometric}. 
The challenge is because of the expectation inside the logarithm in \autoref{eq:entropy-objective-pi}. 
\cite{lee2019efficient,pong2019skew} addressed this intractability by first estimating the visited state distribution and then using this estimate to optimize the entropy-based objective.
Unfortunately, such alternating approaches are often are prone to instability and slow convergence~\cite{guo2021geometric}.
In this work, we take an alternative direction and learn a network to directly estimate the visited state distribution.
The combination of predecessor trace $\pmb{\eta}$ and successor representation $\pmb{\psi}^{\pi}$ can be leveraged to estimate the state visitation distribution which is obtained using:
\begin{align}
    \pmb{\eta}(\tau_{:T-1}) + \pmb{\psi}^\pi( \tau_{:T},a_T ) &= \underbrace{\sum_{t=1}^{T-1} \gamma(t) \pmb{e}_{s_t}}_{=\pmb{\eta}(\tau_{:T-1})} + \underbrace{\mathbb{E}_{\tau_{T+1:}, \pi} \Bigg[  \sum_{t=T}^h\gamma(t) \pmb{e}_{s_t} \Bigg| \tau_{:T},a_T \Bigg]}_{=\pmb{\psi}^\pi( \tau_{:T}, a_T )} &\text{(by Eq.~\eqref{eq:objective_sf_pf})} \nonumber \\
\end{align}
The \gls{sr} vector can be learned with gradient based optimization and provides the estimate of state visitation distribution for a given policy $\pi$ and trajectory. 
The policy can utilize this estimate to learn optimal behaviors for efficient exploration in the environment.

To learn optimal behaviors for continuous action spaces, \alg{} uses an actor-critic architecture comprising of a deterministic actor~$\pi_{\mu}(\tau)$ that provides the action and a critic to estimate the utility function. 
Here, both the actor and critic networks are non-Markovian and depend on the entire history of visited states.
The goal of the critic network is to approximate the Q-function for a given trajectory $\tau$ and a given action $a \in \mathcal{A}$.
For a given trajectory $\tau_{:T}$, critic computes this by combining the predecessor representation and the \gls{sr} vector.
The predecessor representation is fixed for a given history, implying that the critic only needs to approximate the \gls{sr}~$\pmb{\psi}_{\theta}(\tau, a)$.
To summarize, the critic estimates the Q-function as shown below:
\begin{align}
    \label{eq:Q-function_entropy_}
    Q_{\theta,\text{expl}}(\tau_{:T}, a_T) = H\left(\pmb{\eta} (\tau_{:T-1}) + \pmb{\psi}_{\theta} (\tau_{:T}, a_T)\right),
\end{align}
To update the critic network, we update the \gls{sr} approximator network using temporal-difference error. 
The target for the \gls{sr} is obtained using the action coming from the current policy~~$a'_{T+1}=\pi_{\mu}(\tau_{:T+1})$, and is given by
\begin{align}
    \pmb{y} = \pmb{e}_{s_T} + \gamma(T+1)\pmb{\psi}_{\theta}(\tau_{:T+1}, a'_{T+1}).
\end{align}
The \gls{sr} network is updated with gradient-based learning to optimize the Mean-Squared Error between the predicted \gls{sr} and the target, and the loss function $\mathcal{L}_{SR}$ is given by
\begin{align}
    \label{eq:loss_sr}
    \mathcal{L}_{SR} &=  || \pmb{\psi}_{\pmb{\theta}} (\tau_{:T}, a_{T})  - \pmb{y}(\tau_{:T+1},a'_{T+1}). ||^2 &\text{(by Eq.~\eqref{eq:loss_sr})} \nonumber
\end{align}
Given an estimate of the \gls{sr} for the current policy, we need a mechanism to update the actor network to maximize the objective.
Deterministic policy gradient algorithm~\cite{silver2014deterministic} provided a way of learning optimal policies with a deterministic actor. 
In this work, we formulate the gradient for the actor parameters using similar mechanism with the goal to maximize the entropy-based utility function.
Proposition~\autoref{prop:pg} presents a derivation of the gradients for the actor network parameters obtained by applying the chain rule on the Shannon-entropy based Q-function. 
\begin{proposition}\label{prop:pg}
    Assuming the CMP satisfies~\citep[conditions A.1]{silver2014deterministic} (all functions are continuous and differentiable across all parameters) and for a $\mu$-parameterized policy function $\pi_{\mu}$ the gradient with respect to $\mu$ of the maximum entropy objective
    \begin{equation*}
        J(\pi_{\mu}) = \mathbb{E}_{\tau\sim\rho} [H(\pmb{\eta}(\tau_{:-1}) + \pmb{\psi} (\tau, \pi_{\mu}(\tau))]
    \end{equation*}
    is
    \begin{equation*}
        \nabla_{\mu} J(\pi_{\mu}) = \mathbb{E}_{\tau\sim\rho} \Big[ \sum_i z_i \nabla_{\mu} \pi_{\mu} (\tau) \nabla_a \pmb{\psi}_i(\tau,a) \big|_{a=\pi_{\mu}(\tau)}\Big] .
    \end{equation*}
    where $z_i=-\log[\pmb{\eta}(\tau_{:-1})_i + \pmb{\psi}(\tau, \pi_{\mu}(\tau))_i] - 1$, $H$ is the Shannon-Entropy function over the representation vectors, and the expectation over trajectories is computed with respect to some trajectory visitation distribution $\rho$. 
\end{proposition}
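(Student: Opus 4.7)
The plan is to compute the gradient by pushing $\nabla_\mu$ through the expectation and then applying the chain rule to the Shannon entropy, treating $\pmb{\eta}(\tau_{:-1})$ as a $\mu$-independent offset and $\pmb{\psi}(\tau, a)$ as a function whose $\mu$-dependence enters only through the action $a = \pi_\mu(\tau)$. This mirrors the standard derivation of the deterministic policy gradient in \citet{silver2014deterministic}: first argue that regularity conditions A.1 justify swapping $\nabla_\mu$ and $\mathbb{E}_{\tau \sim \rho}$, then differentiate the inner Shannon entropy.

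Concretely, I would write $\pmb{p}(\tau,\mu) = \pmb{\eta}(\tau_{:-1}) + \pmb{\psi}(\tau, \pi_\mu(\tau))$ and observe that the predecessor term $\pmb{\eta}(\tau_{:-1})$ is a deterministic function of the observed states and does not depend on $\mu$. Hence $\nabla_\mu \pmb{p}_i = \nabla_\mu \pmb{\psi}_i(\tau, \pi_\mu(\tau))$, and the chain rule gives
\begin{equation*}
\nabla_\mu \pmb{\psi}_i(\tau, \pi_\mu(\tau)) = \nabla_\mu \pi_\mu(\tau)\, \nabla_a \pmb{\psi}_i(\tau,a)\big|_{a=\pi_\mu(\tau)} .
\end{equation*}
For the entropy, expanding $H(\pmb{p}) = -\sum_i \pmb{p}_i \log \pmb{p}_i$ and differentiating termwise yields $\nabla_\mu H(\pmb{p}) = \sum_i (-\log \pmb{p}_i - 1)\,\nabla_\mu \pmb{p}_i$, since the two pieces produced by the product rule collapse to $(\log \pmb{p}_i + 1)\nabla_\mu \pmb{p}_i$ (the $\pmb{p}_i \cdot (1/\pmb{p}_i)$ cancels). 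Identifying $z_i = -\log \pmb{p}_i - 1 = -\log[\pmb{\eta}(\tau_{:-1})_i + \pmb{\psi}(\tau, \pi_\mu(\tau))_i] - 1$ and recombining with the chain-rule factor above immediately gives the claimed expression.

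The main obstacle is the standard DPG subtlety: the SR $\pmb{\psi}$ itself depends on the policy $\pi_\mu$ (because it is evaluated under $\pi_\mu$ for future steps), and the trajectory distribution $\rho$ can likewise depend on $\mu$. As in \citet{silver2014deterministic}, the proposition implicitly adopts the DPG convention of treating $\pmb{\psi}$ as a fixed function and $\rho$ as a fixed sampling distribution at the point of differentiation, absorbing the indirect dependence into the critic-learning update of $\pmb{\psi}_\theta$. I would therefore make explicit that the gradient computed is the on-policy deterministic policy gradient of the composed entropy objective, invoking regularity conditions A.1 to justify differentiation under the integral and treating $\pmb{\psi}$ as evaluated at the current parameters. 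Once that framing is in place, the remainder of the argument is a routine chain rule and termwise differentiation of the Shannon entropy, and the final line follows by linearity of expectation.
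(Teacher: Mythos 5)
Your proposal is correct and follows essentially the same route as the paper's proof: differentiate the Shannon entropy termwise via the product rule (using that $\pmb{\eta}$ is $\mu$-independent so the $\pmb{p}_i\cdot(1/\pmb{p}_i)$ term collapses to $+1$), apply the chain rule through $a=\pi_\mu(\tau)$, and identify $z_i=-\log\pmb{p}_i-1$. Your explicit remark that the argument adopts the DPG convention of treating $\pmb{\psi}$ and $\rho$ as fixed at the point of differentiation is a caveat the paper leaves implicit, and is a welcome clarification rather than a deviation.
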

\begin{proof}

We begin by rewriting the Shannon Entropy here for a $T$-step trajectory as 
\begin{align}
    H(\pmb{\eta}(\tau_{:T-1}) + \pmb{\psi} (\tau_{:T}, a_T)) &= -\sum_i (\pmb{\eta}(\tau_{:T-1})_i + \pmb{\psi} (\tau_{:T}, a_T)_i) \log ((\pmb{\eta}(\tau_{:T-1})_i + \pmb{\psi} (\tau_{:T}, a_T)_i),
\end{align}
where $a_T = \pi_\mu(\tau_{:T})$.

To simplify the notations, we will use $\pmb{\eta}_i=\pmb{\eta}(\tau_{:T-1})_i$ and $\pmb{\psi}_i=\pmb{\psi} (\tau_{:T}, a_T)_i$ to represent the $i$th term of the predecessor and successor representation vectors.
Now taking the gradient with respect to the actor parameters $\mu$ gives:
\begin{align}
    \nabla_{\mu} H(\pmb{\eta}_i + \pmb{\psi}_i) &= -\nabla_{\mu} \sum_i (\pmb{\eta}_i + \pmb{\psi}_i) \log (\pmb{\eta}_i + \pmb{\psi}_i) \\
    &= -\sum_i [\nabla_{\mu} (\pmb{\eta}_i + \pmb{\psi}_i) \log(\pmb{\eta}_i + \pmb{\psi}_i)] \\
    &= -\sum_i [\log(\pmb{\eta}_i + \pmb{\psi}_i) \nabla_{\mu} (\pmb{\eta}_i + \pmb{\psi}_i) +  (\pmb{\eta}_i + \pmb{\psi}_i) \nabla_{\mu} \log(\pmb{\eta}_i + \pmb{\psi}_i)] \\
    &= -\sum_i [\log(\pmb{\eta}_i + \pmb{\psi}_i) \nabla_{\mu} \pmb{\psi}_i +
     \nabla_{\mu} \pmb{\psi}_i \\
     &= -\sum_i [\log(\pmb{\eta}_i + \pmb{\psi}_i) + 1] \nabla_{\mu} \pmb{\psi}_i 
\end{align}

Now, using the chain rule on the $i$th feature in \gls{sr}, we obtain 
\begin{align}
    \nabla_{\mu} \pmb{\psi}_i = \nabla_{\mu} \pi_{\mu}(\tau_{:T}) \nabla_a \pmb{\psi}(\tau_{:T},a)_i |_{a=\pi_{\mu}(\tau_{:T})} 
\end{align}

By substitution
\begin{equation}
    \nabla_{\mu} H(\pmb{\eta}_i + \pmb{\psi}_i) = -\sum_i [\log(\pmb{\eta}_i + \pmb{\psi}_i) + 1] \nabla_{\mu} \pi_{\mu}(\tau_{:T}) \nabla_a \pmb{\psi}(\tau_{:T},a)_i |_{a=\pi_{\mu}(\tau_{:T})}.
\end{equation}

Therefore, the gradient of the overall objective is
\begin{align}
    \nabla_{\mu}J(\pi_{\mu}) &= \mathbb{E}_{\tau\sim\rho} [ \nabla_{\mu} H(\pmb{\eta}(\tau_{:-1}) + \pmb{\psi} (\tau, \pi_{\mu}(\tau))] \\
    &= - \mathbb{E}_{\tau\sim\rho} \left[\sum_i [\log(\pmb{\eta}_i + \pmb{\psi}_i) + 1] \nabla_{\mu} \pi_{\mu}(\tau_{:T}) \nabla_a \pmb{\psi}(\tau_{:T},a)_i |_{a=\pi_{\mu}(\tau_{:T})} \right].
\end{align}

This completes the proof.
\end{proof}
In Proposition~\ref{prop:pg}, we derive the gradient of the actor parameters for the maximum state entropy exploration objective.
Taking inspiration from algorithms~\citep{lillicrap2015ddpg,fujimoto2018td3,haarnoja2018soft} that extend \gls{dpg} to make the optimization process stable when scaling to larger state and action space, we base our implementation to be similar to the TD3~\citep{fujimoto2018td3} algorithm.
In \autoref{app:architecture}, we outline the learning procedure to learn using the policy gradient derived in Proposition~\ref{prop:pg}. 
Furthermore, we also discuss how the proposed algorithm handles continuous state spaces.

\section{Neural Network Architecture}
\label{app:architecture}

~\alg{} approximates the \gls{sr} with a parameterized function $\pmb{\psi}_{\pmb{\theta}}$ to learn an exploration policy and predict the state visitation distribution.
Because the \gls{sr} is conditioned on a trajectory $\tau$ of variable length, we implement the function $\pmb{\psi}_{\pmb{\theta}}$ with a \glsfirst{rnn} architecture, as outlined in~\autoref{fig:network_architecture}.
In this architecture, the states in a trajectory $\tau_{:T}$ are first fed through a encoder network (E) comprising of \glsfirst{mlp} layers.
Subsequently, the output of the \gls{mlp} is fed through an \gls{rnn} (denoted with F) architecture to compress the state sequence into one real-valued feature vector.
Since, \gls{rnn} are known to suffer from vanishing gradients~\cite{bengio1994learning}, we implement the \gls{rnn} with a \gls{gru}~\cite{cho2014gru}. 
Leveraging recurrent networks to learn the \gls{sr} has been explored previously in \citep{barreto2018transfer, borsa2018universal}. 
Finally, the recurrent state obtained from the \gls{rnn} is concatenated with the representation of the current state and is passed through the the decoder (D) with \gls{mlp} layers to predict an \gls{sr} vector for a given action. 
In the following paragraphs, we elaborate on how the proposed architecture was used to train the agent for finite and infinite action spaces.

\begin{algorithm}[tb]
   \caption{\alg{}: Finite Action Space Framework}
   \label{algo:epl_qlearning}
\begin{algorithmic}[1]
    \STATE Initialize \gls{sr} network with parameters $\theta$ and the replay buffer $\mathcal{B}=\{\}$
    \STATE Denote the predecessor feature with $\pmb{\eta}$, discount function with $\gamma$, and episode length with $h$
    
    \WHILE{Training}
    \STATE Collect $\tau_{exp}=\{s_1,a_1,..,s_h\}$ using current policy $\pi_{\theta}$ and add it to replay buffer $\mathcal{B}$=$\mathcal{B} \cup \tau_{exp}$
    \\
    \FOR{each training step}
        \STATE Sample batch of $\tau=(s_1,..a_{l-1},s_l)$ $\sim$ $\mathcal{B}$ of sequence length $l \in \{2,..,h\}$
        \STATE Compute $a'$= $\arg \max_{a \in \mathcal{A}} H(\pmb{\eta} (\tau) + \pmb{\psi}_{\theta} (\tau, a))$
        \STATE Compute target $\pmb{y} = \pmb{e}_{s_{l-1}} + \gamma(l) \pmb{\psi}_{\theta}(\tau, a')$
        \STATE Update SR network by performing gradient step on $||\pmb{y}$ - $\pmb{\psi}_{\theta}(\tau_{:l-1}, a_{l-1})||_2^2$
    \ENDFOR
   \ENDWHILE
\end{algorithmic}
\end{algorithm}

\paragraph{Finite Action Space Variant}
For the finite action space variant, the decoder outputs a \gls{sr} vector for each action $a\in\mathcal{A}$. This is similar to prior method that learn \gls{sf} for discrete action spaces~\citep{lehnert2017advantages,barreto2017successor}. Algorithm~\ref{algo:epl_qlearning} describes the learning procedure for training \alg{} to get exploratory policies.
\begin{algorithm}[tb]
\caption{\alg{}: Infinite Action Space Framework}
\label{algo:epl_pg}
\begin{algorithmic}[1]
    \STATE Initialize \gls{sr} network with parameters $\theta_1$, $\theta_2$, policy parameters $\mu$ and the replay buffer $\mathcal{B}=\{\}$
    \STATE Set target parameters equal to the main parameters: $\theta_{targ,1}=\theta_1$, $\theta_{targ,2}=\theta_2$, and $\mu_{targ} \leftarrow \mu$
    \STATE Denote the predecessor feature with $\pmb{\eta}$, discount function with $\gamma$, and episode length with $h$
    \WHILE{Training}
        \STATE Collect $\tau_{exp}=\{s_1,a_1,..,s_h\}$ using target policy $\pi_{\mu_{targ}}$ and add it to replay buffer $\mathcal{B}$=$\mathcal{B} \cup \tau_{exp}$
        \\
        \FOR{each training step $j$}
            \STATE Sample batch of $\tau=(s_1,..a_{l-1},s_l)$ $\sim$ $\mathcal{B}$ of sequence length $l \in \{2,..,h\}$
            \STATE Compute target actions $a' = clip(\pi_{\mu_{targ}}(\tau_{:l}) + clip(\epsilon, -c, c), a_{Low}, a_{High}), \epsilon\sim\mathcal{N}(0, 1)$
            \STATE Compute i=$\arg\min_{k \in \{ 1,2 \}}$ $H$($\pmb{\eta}(\tau_{:l-1})$ + $\pmb{\psi}_{\theta_k}(\tau_{:l-1}, a'))$ 
            \STATE Compute target $\pmb{y}$=$\pmb{e}_{s_l}$ + $\gamma(l)$ $\pmb{\psi}_{\theta_i}(\tau_{:l-1}, a')$
            \STATE Update the \gls{sr} networks by performing gradient steps on\\ ~~~~~~~~~~~~~~~~~~~$||\pmb{y}$ - $\pmb{\psi}_{\theta_i}(\tau_{:l-1}, a_{l-1})||_2^2$,~~~~~~~~~~ $i=1,2$
            \IF{j \% policy\_update == 0}
                \STATE Perform update step for policy by computing gradients using\\ ~~~~~~~~~~~~~~~~~~~$\sum_{i}$ $z_i$ $\nabla_a$ $\pmb{\psi}_{\theta_1}(\tau_{:l},a)$ $|_{a=\pi(\tau_{:l})}$ $\nabla_{\mu}$ $\pi_{\mu}(\tau_{:l})$,
                \\
                where $z_i = -\log(\pmb{\eta}(\tau_{:l})_i$ - $\pmb{\psi}_{\theta_1}(\tau_{:l}, \pi_{\mu}(\tau_{:l}))_i) + 1$
                \STATE Update target networks with \\
                ~~~~~~~~~~~~$\theta_{targ,i}$ $\leftarrow$ $\rho\theta_{targ,i} + (1 - \rho) \theta_{i}$,~~~~~~~~~~ $i=1,2$ \\
                ~~~~~~~~~~~~$\mu_{targ} \leftarrow \rho\mu_{targ} + (1 - \rho) \mu$
            \ENDIF
        \ENDFOR
    \ENDWHILE
\end{algorithmic}
\end{algorithm}

\paragraph{Infinite Action Space Variant}
For infinite action space variant, the hidden state from the recurrent network is passed through a deterministic actor network which comprises of \gls{mlp} layers.
The policy network (actor) is conditioned on the hidden states because in \alg{} the policy is a function of trajectories and not individual states.
The hidden state from the recurrent network is concatenated with the action to predict the \gls{sr} vector.
The estimated \gls{sr} vector is used to calculate the visitation distribution over one-hot embeddings of states and these \gls{sr} predictions are then used to computed to loss objective for optimization.
For the Reacher and Pusher tasks, we manually sub-select which dimensions of the state space are one-hot encoded.
In these cases, \alg{} learns an exploration policy that maximizes the entropy of visitation distribution across these sub-selected dimensions only.
This approach to sub-selecting state dimensions is similar to prior work on maximum state entropy exploration~\citep{pmlr-v97-hazan19a,mutti2022maximumentropyexploration}.
In this work, the agent is trained using similar techniques as the TD3 agent~\citep{fujimoto2018td3}.
The agent keeps a single encoder and recurrent network to encode the history of observed states.
The encoded states are passed through two decoder networks to predict the \gls{sr} vectors, which are used to represent the two critic networks.
The target for \gls{sr} is computed using the vector that leads to a smaller value of the two utility functions.
There is a single actor network that specifies the action from the hidden state.
In addition, \alg{}-maintains a target network for all the components---encoder, recurrent, critics, and actor networks.
Furthermore, similar to the TD3 algorithm, a clipped noise is added to each dimension of the action from the target network.
Moreover, we also use delayed actor updates where the actor network is updated less frequently than the \gls{sr} networks.
Lastly, the gradients from the actor are not passed through the encoder and the recurrent networks.
The procedure for training this variant is provided in Algorithm~\ref{algo:epl_pg}. 

\begin{figure}[ht]
    \centering
    \includegraphics[width=0.9\columnwidth]{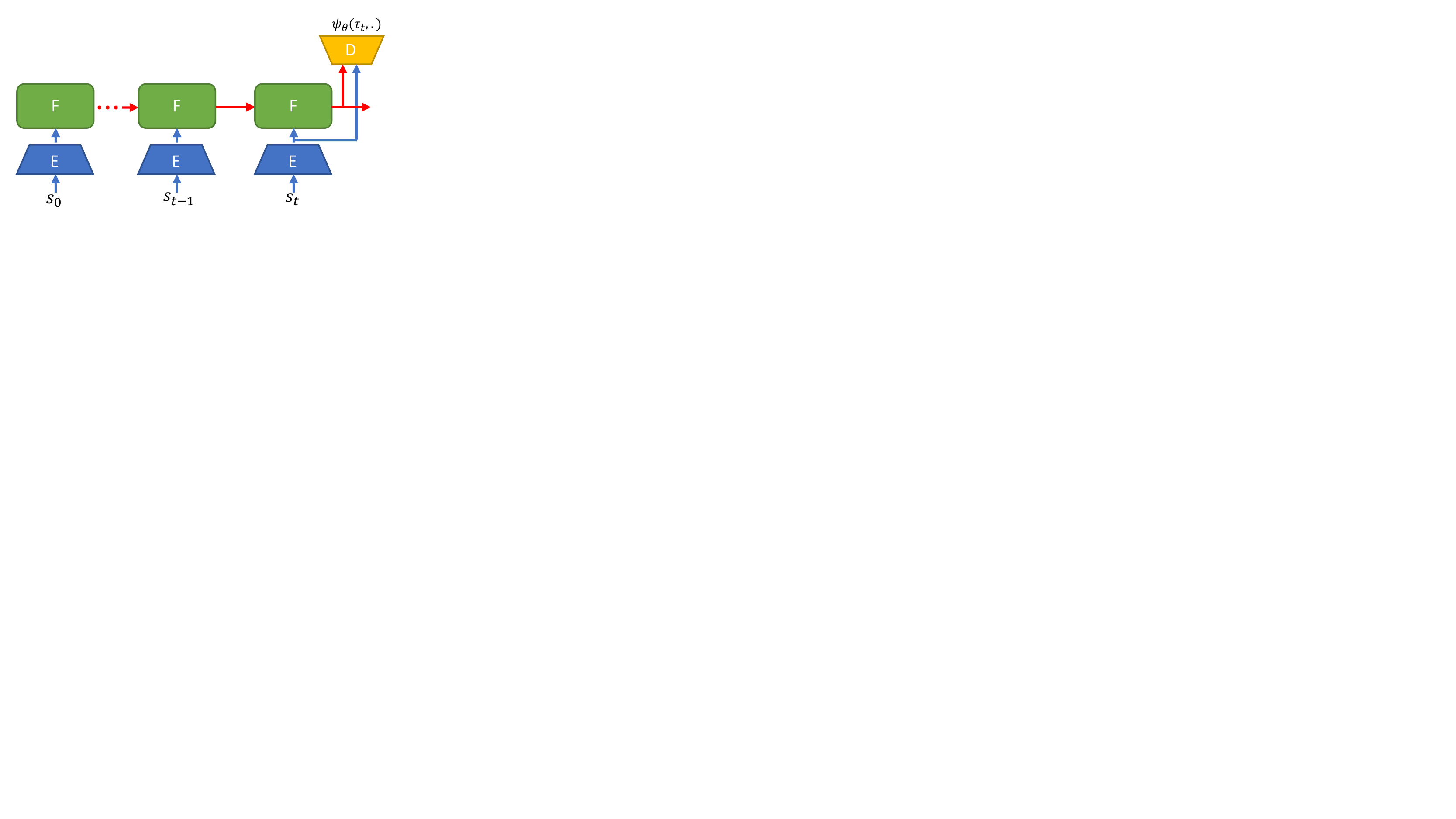}
    \caption{Network architecture to learn the \gls{sr}. The states are firstly passed through an encoder (E), followed by feeding the encoded states through a \gls{rnn} (\gls{gru} in our case) (F). This compresses the history of visited states, and the obtained hidden state is concatenated with the encoded state to predict the \gls{sr} vector for an action using a decoder network (D).}
    \label{fig:network_architecture}
\end{figure}

\section{Discount Function}
\label{app:discount_function}
\begin{figure}[ht]
    \centering
    \includegraphics[width=0.5\textwidth]{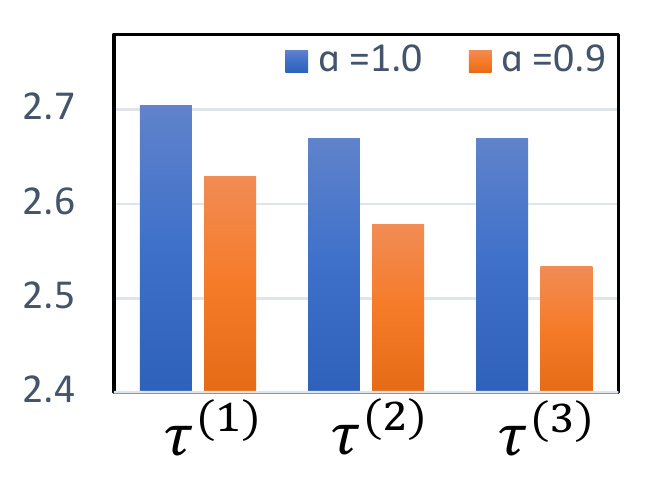}
    \caption{Illustration of the values of the entropy with different value of $\alpha$ hyperparameter in the proposed $\gamma$-function for the trajectories introduced in \autoref{fig:motivation}.}
    \label{fig:ablation_alpha}
\end{figure}
In this work, we have used a time-dependent $\gamma$-function. 
Using the gridworld example described in \autoref{fig:motivation}, we now present how the choice of $\gamma$-function affects the entropy term in the objective. Suppose there are three trajectories followed by the given trace $\tau_{:T}$, where we denote the $i$-th trajectory with $\tau^{(i)}$. 
Here, Figure~\ref{fig:motivation_traj1} shows an optimal trajectory~($\tau^{(1)}$) which combined with the trace covers each cell of the grid with $15$ steps.
Figure~\ref{fig:motivation_traj2} presents a suboptimal trajectory~($\tau^{(2)}$) where the agent takes the right action from the current state and visits a previously observed state in the last step. Figure~\ref{fig:motivation_traj3} shows another sub-optimal trajectory~($\tau^{(3)}$) which takes the right action in the current state but visits the new state twice because it goes to the top right corner of the grid.

For the intermediate step $T$, we define the discount factor for the predecessor representation for the trace as $\gamma(t) = \frac{\alpha^{T - t}}{Z}$, where $\alpha$ is a scalar between (0, 1], and $Z=\Sigma_{t=0}^T \alpha^{T-t} + \Sigma_{t=T}^h \alpha^{t-T}$ is the normalization factor.
The $\gamma$-function for the successor representation is denoted using $\gamma(t) = \frac{\alpha^{t - T}}{Z}$. 
The proposed $\gamma$-function for both the representations is similar to discounting used in standard RL literature~\cite{dayan1993improving,sutton1988learning}.
Upon comparing the entropy for given trajectories with $\alpha=0.9,1.0$, we observe in ~\autoref{fig:ablation_alpha} that $\tau^{(1)}$ being the optimal trajectory attains higher entropy when compared with $\tau^{(2)}$ and $\tau^{(3)}$. 
The other sub-optimal trajectories $\tau^{(2)}$ and $\tau^{(3)}$ achieve same entropy when $\alpha$ is set to 1.0.
However, for $\alpha=0.9$, the discount function $\gamma$ emphasizes which states are visited earlier in the trajectory and assigns the lowest score to the trajectory $\tau^{(3)}$ because this trajectory revisits states earlier in the sequence than the other options $\tau^{(2)}$ and $\tau^{(1)}$.
This example illustrates how the $\gamma$-function can be used to trade off near-term vs. long-term exploration behavior.
Depending on the $\alpha$ setting, the agent can be encouraged to avoid re-visiting states either only in the short-term or the long-term, similar to how discounting encourages maximizing short-term over long-term rewards in algorithms like Q-learning~\cite{watkins1992qlearning}.

\section{Environments}
\label{app:environments}
\begin{figure*}[t]
\centering
\subfigure[]{
\includegraphics[width=.45\textwidth]{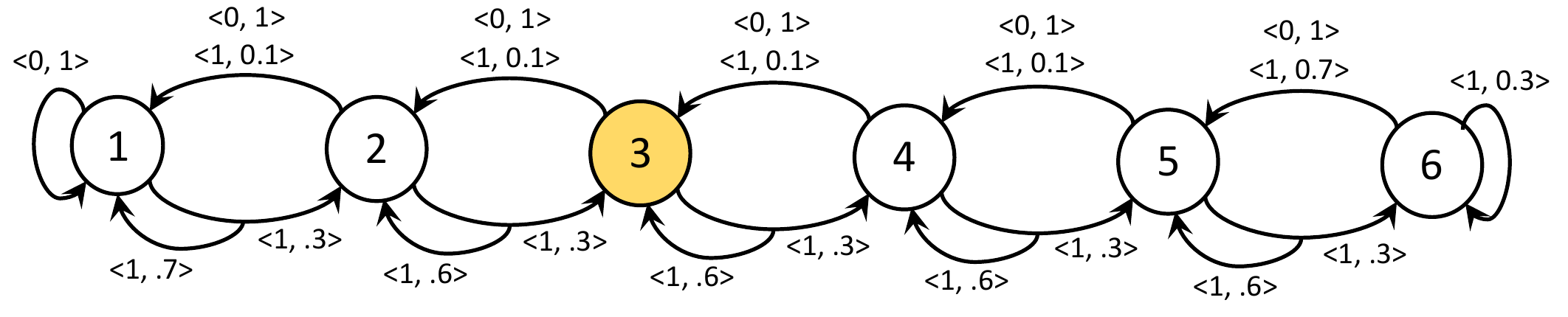}
\label{fig:rivewswim}
}
\subfigure[]{
\includegraphics[width=.25\textwidth]{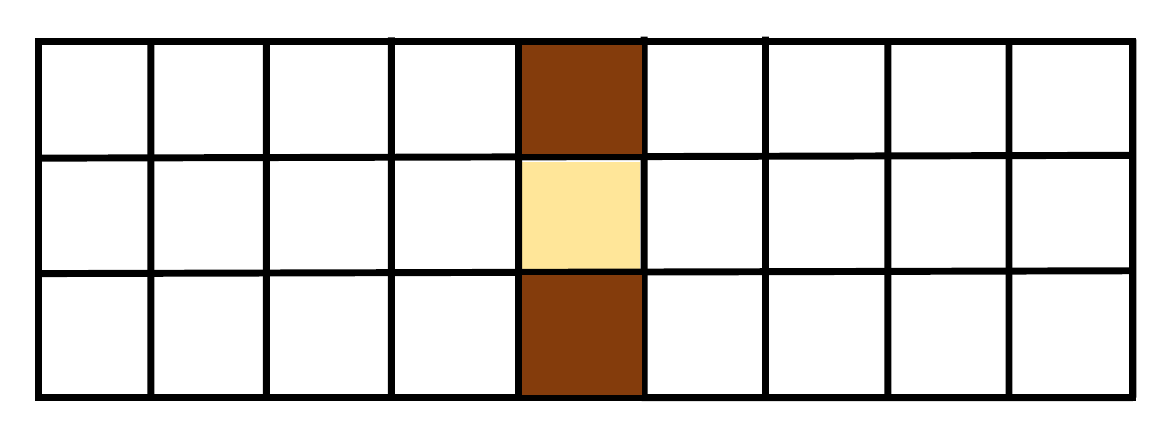}
\label{fig:tworooms}
}
\subfigure[]{
\includegraphics[width=.22\textwidth]{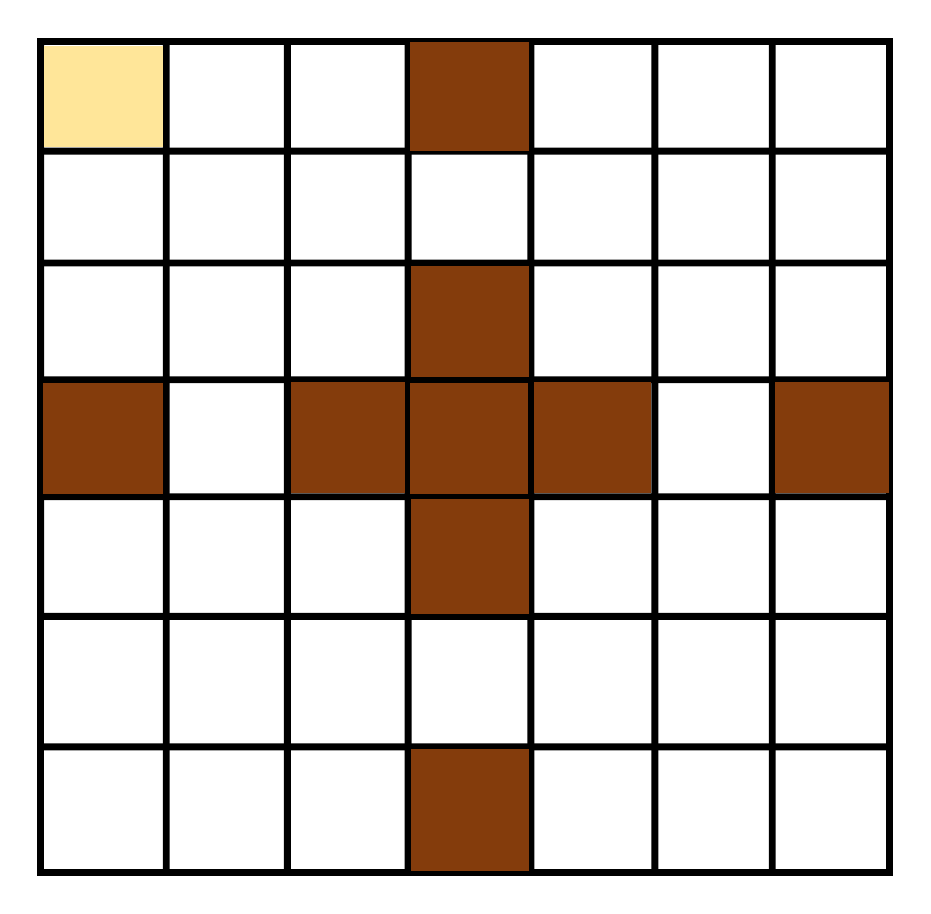}
\label{fig:fourrooms}
}
\caption{(a) RiverSwim~\cite{strehl2008analysis}, (b) TwoRooms, and (c) FourRooms environments, respectively. The yellow block denote the initial state of the agent in each episode. The brick red regions represent the walls in the environment. When taking an action that collides with the walls, the chosen action does not change the state of the agent.}
\label{fig:environment_finite}
\end{figure*}

\begin{figure*}[t]
\centering
\subfigure[]{
\includegraphics[width=.3\textwidth]{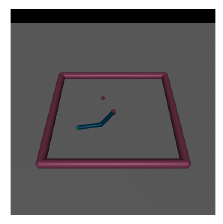}
\label{fig:reacher}
}
\subfigure[]{
\includegraphics[width=.3\textwidth]{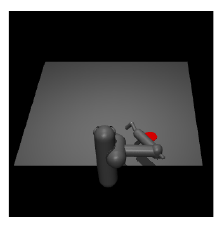}
\label{fig:pusher}
}
\caption{(a) Reacher and (b) Pusher environments for experiments with infinite action space.}
\label{fig:environment_infinite}
\end{figure*}

For finite action space variant, we experimented with ChainMDP, RiverSwim, $5\times5$ Grid-world, TwoRooms and FourRooms environments, which have finite action and state space, respectively. For infinite action space variant, we experiment with Reacher and Pusher environments where we want the agent to move its fingertip to different locations in the environment. The environments used in this work are further described below:

\paragraph{ChainMDP:} ChainMDP is an environment where the agent can take only move in two directions---left or right. In this work, we experiment with both deterministic and stochastic version of ChainMDP environments. The stochastic ChainMDP is similar to RiverSwim environment~\cite{strehl2008analysis}~(Figure~\ref{fig:rivewswim}).

\paragraph{GridWorld:} In the Gridworld environment~(shown in Figure~\ref{fig:motivation}), the agent can take 4 actions to move in any of the 4 directions. In this work, we experiment with the gridworld of dimensions~$5\times5$. The agent always start in the top left corner of the grid. For the gridworld, there are multiple possible optimal trajectories, and the number of such trajectories increases explonentially with size of grid. 

\paragraph{TwoRooms:} The proposed TwoRooms environment is a gridworld with some walls. As shown in Figure~\ref{fig:tworooms}, the agent starts at the center of wall between the two rooms and has to first navigate in one of the rooms, visit the starting state and then move to the other room. This makes the task challenging as the agents requires to track the trace because when the agents reaches initial state after exploring one room, the information of which room was visited should aid in going to the other room.

\paragraph{FourRooms:}
The FourRooms environment~(depicted in Figure~\ref{fig:fourrooms}) has four rooms connected which are connected by open shots between the walls. This task is even more challenging as the agents while navigating need to first explore the current room followed by efficiently going across different rooms. 

\paragraph{Reacher:}
The Reacher environment~(shown in Figure~\ref{fig:reacher}) is a continuous control environment having a two-jointed robotic arm with continuous state and action spaces. The action space denotes the torques applied to the hinges. 
The state denotes the position, angles and angular velocities of the arms. 
The agent is tasked to maximize the entropy over the position of the fingertip.

\paragraph{Pusher:}
The Pusher environment~(shown in Figure~\ref{fig:pusher}) is a continuous control environment having a multiple-jointed robotic arm with continuous state and action spaces. The action space denotes the torques applied to the hinges. 
The state denotes the position, angles and angular velocities of the arms/hinges. 
Similar to the Reacher environment, the agent is tasked to maximize the entropy over the position of the fingertip.
However, this task is harder to solve because having multiple joints leads to a larger action and state space making it a more challenging control problem.

For training and evaluation, we have used different parameters specific to each of the environment. For training, the parameters are the length of an episode and the number of episodes used for training. The number of environment steps can be obtained by multiplying these 2 parameters. For evaluation of an agent, we use different episode length for the three metrics defined to measure the performance of agents in \autoref{sec:experiments}.

\begin{table*}[hbt!]
\scriptsize
	\centering
		\begin{tabular}{l|ccccc}
			\toprule
			\textbf{Name} & \textbf{ChainMDP} & \textbf{RiverSwim} & \textbf{Gridworld} & \textbf{TwoRooms} & \textbf{FourRooms}\\
			\midrule
            \multicolumn{6}{c}{\textit{Training Parameters}}\\
            \midrule
             Length of trajectory from environment & 20 & 50 & 50 & 100 & 200 \\
             Number of episodes & 1000 & 1000 & 1000 & 1000 & 2500 \\
             \midrule
             \multicolumn{6}{c}{\textit{Evaluation Parameters}}\\
            \midrule
            Horizon $h$ for Entropy metric & 20 & 50 & 50 & 100 & 200 \\
            Horizon $h$ for State Coverage metric & 20 & 50 & 50 & 100 & 200\\
            Horizon $h$ to measure Episode Length & 100 & 500 & 200 & 1000 & 1000 \\
            \bottomrule
    	\end{tabular}
	\caption{Defines the parameters of the environments with discrete actions during training and evaluation, respectively.}
\end{table*}

\begin{table*}[hbt!]
\scriptsize
	\centering
		\begin{tabular}{l|cc}
			\toprule
			\textbf{Name} & \textbf{Reacher} & \textbf{Pusher} \\
			\midrule
            \multicolumn{3}{c}{\textit{Training Parameters}}\\
            \midrule
             Length of trajectory from environment & 100 & 200 \\
             Number of episodes & 1000 & 1000 \\
             \midrule
             \multicolumn{3}{c}{\textit{Evaluation Parameters}}\\
            \midrule
            Horizon $h$ for Entropy metric & 100 & 200 \\
            Horizon $h$ for State Coverage metric & 100 & 200 \\
            \bottomrule
    	\end{tabular}
	\caption{Defines the parameters of the environments with continuous actions during training and evaluation, respectively.}
\end{table*}

\section{Hyper Parameters}
\label{app:hyperparameter}

In this section, we describe the hyperparameters used for training the proposed method \alg{}. \autoref{tab:appendix_hyperparameter_discrete} and \autoref{tab:appendix_hyperparameter_control} presents the list of hyperparameters for the discrete and continuous control environments, respectively.
All models were trained on a single NVIDIA V100 GPU with 32 GB memory.
The implementation of the proposed method was done using the RLHive~\citep{Patil2023} library.

\begin{table*}[hbt!]
	\centering
		\begin{tabular}{l|c}
			\toprule
			\textbf{Name} & \textbf{Value}\\
			\midrule
             Batch Size & 32 \\
             Sequence Length & 10 / 20 / 50 / 50 / 100 \\
             $\alpha$ for $\gamma$-function & 0.95 \\
             Encoder layers & 1\\
             Encoder output dimensions & 64 / 64 / 128 / 128 / 256\\ 
             Encoder activation & LeakyReLU~\cite{yu2017unsupervised} \\
             Hidden state of GRU & 64 / 64 / 128 / 128 / 256\\
             Hidden layer dimension for SR decoder & 32 / 32 / 64 / 64 / 128\\
             Decoder activation & None\\
             Optimizer & Adam~\cite{kingma2014adam} \\
             Learning rate & 3e-4\\
             Capacity of replay buffer & 200000 \\
            \bottomrule
    	\end{tabular}
	\caption{Hyper parameters used for training \alg{}. When parameters are separated by /./././., it means the corresponding hyperparameters for ChainMDP, RiverSwim, Gridworld, TwoRooms and FourRooms environments, respectively. When tuning the agent for a task, we recommend searching over $\alpha \in \{0.9, 0.95, 0.98, 0.99\}$, and hidden state dimension of GRU and encoder output dimensions in $\{64, 128, 256, 512\}$. For the replay buffer, we have used the replay buffer implemented in DreamerV2~\cite{hafner2020dreamerv2}, which for an episode samples a chunk of a given length.}
	\label{tab:appendix_hyperparameter_discrete}
\end{table*}

\begin{table*}[hbt!]
	\centering
		\begin{tabular}{l|c}
			\toprule
			\textbf{Name} & \textbf{Value}\\
			\midrule
             Batch Size & 256 \\
             Sequence Length & 100 \\
             $\alpha$ for $\gamma$-function & 0.95 \\
             Encoder layers & 2\\
             Encoder output dimensions & 256\\ 
             Encoder activation & LeakyReLU~\cite{yu2017unsupervised} \\
             Hidden state of GRU & 256\\
             Hidden layer dimension for SR decoder & 256\\
             Decoder activation & None\\
             Optimizer & Adam~\cite{kingma2014adam} \\
             Learning rate & 3e-4\\
             Capacity of replay buffer & 200000 \\
             Polyak constant & 0.005 \\
             Grad Clip & 5.0 \\
             Action noise & 0.1 \\
             Target noise & 0.2 \\
            \bottomrule
    	\end{tabular}
	\caption{\ Hyper parameters used for training \alg{} on continuous state space environments. When tuning the agent for a task, we recommend searching over $\alpha \in \{0.9, 0.95, 0.98, 0.99\}$, and hidden state dimension of GRU and encoder output dimensions in $\{64, 128, 256, 512\}$. For the replay buffer, we have used the replay buffer implemented in DreamerV2~\cite{hafner2020dreamerv2}, which for an episode samples a chunk of a given length.}
	\label{tab:appendix_hyperparameter_control}
\end{table*}

\section{Meta-RL}
\label{app:metarl}

In this work, we demonstrated how $\eta\psi$-Learning can learn optimal policies that can maximize the entropy of state visitation distribution. Such policies are useful for many subareas of \gls{rl} where during evaluation the task is to adapt to new reward functions with minimal interactions with the environment. 
A challenging subproblem in such tasks is to infer the reward function. This is especially harder when the reward is sparse. 
Some recent works have explored adding exploratory behaviors for initial interactions with the environment to allow agent to infer the reward function. 
VariBAD~\citep{zintgraf2019varibad} algorithm learns optimal policies that can explore during evaluation to speed up adaptation for Meta-\gls{rl} tasks. 
VariBAD maintains a belief over the state space to explore in the environment and upon discovering the reward function adapts to the 
To illustrate the exploratory capabilities of \alg{}, we compare with VariBAD as baseline in this section. 

The implementation of VariBAD provided by the authors is used to conduct this experiment.
The baseline was trained for 10 million environment steps on the $5\times5$ gridworld using the setup described in the paper.
Two metrics are employed to compare the agents:
\begin{itemize}
    \item \textit{State Coverage} computes the fraction of the state space covered by the agent.
    \item \textit{Goal Search Time} computes the environment steps taken to locate the sparse reward goal state. This evaluates the ability of the agent at quickly finding the reward which is essential for swift adaptation to novel tasks.
\end{itemize}
The two metrics evaluate the agents on average time taken to find the reward function and the average search completion time for covering the grid. 
The VariBAD~\cite{zintgraf2019varibad} algorithm considered sparse reward task where a random location is sampled after each episode as the goal state and is assigned a high reward.  
To compute the \textit{Goal Search Time} metric, we sample a goal state randomly and record the steps taken to locate the target state.
For evaluation, the metric is averaged over 16 sampled goal state for each seed.
Figure~\ref{fig:metarl} presents the comparison of VariBAD and \alg{}~on both metrics across 5 seeds.
The proposed method \alg{}~achieves outperforms VariBAD on both metrics while only being trained for 100K environment steps. 
This demonstrate the efficacy of \alg{} at exploring in environment that involves inferring the reward function during evaluation. We believe \alg{} can be combined with a adaptation policy, where the proposed method can explore to find the reward and the adaptation policy is trained to adapt quickly to the reward function, and we leave this for future work.

\section{Ablation Studies}
In this section, we present ablations studies to understand the gains of the proposed method ~\alg{}. 

\subsection{MaxEnt with a recurrent network}
\label{app:maxent_recurrent}
\begin{figure}[ht]
    \centering
    \includegraphics[width=1.0\columnwidth]{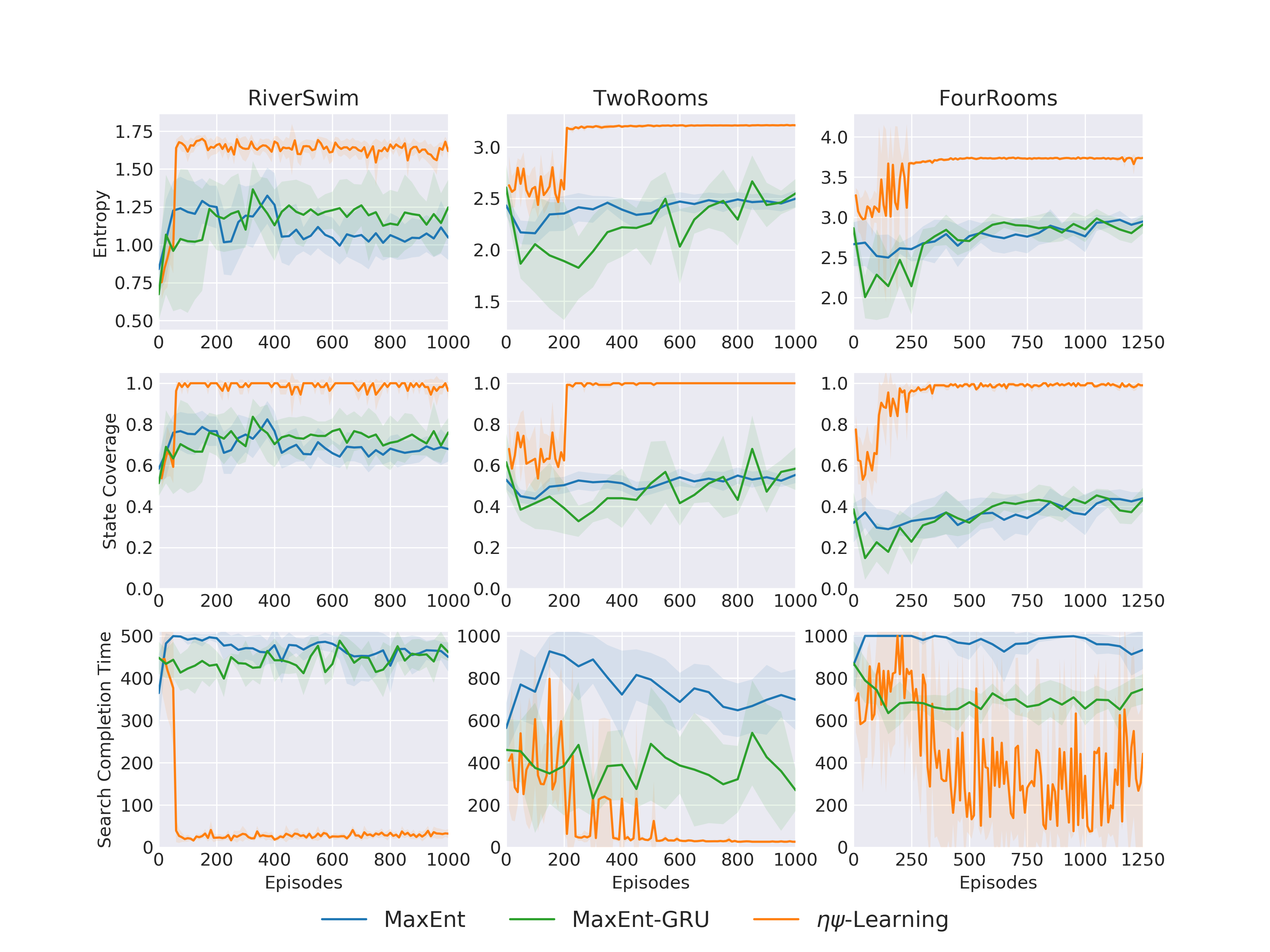}
    \caption{Comparison of the baseline MaxEnt when trained with a recurrent network.}
    \label{fig:ablation_recurrent_maxent}
\end{figure}

We conduct an experiment with a modification to the baseline MaxEnt~\cite{pmlr-v97-hazan19a} where agent observed the history of visited states.
This is done to evaluate if the improvements are coming from having a recurrent policy.
To this end, the state-conditioned policy in MaxEnt is replaced with a recurrent policy where the GRU~\cite{cho2014gru} encodes the states observed in the trajectory. 
The parameters of the recurrent policy is optimized using the loss function described in MaxEnt~\cite{pmlr-v97-hazan19a}. 
\autoref{fig:ablation_recurrent_maxent} presents the results of MaxEnt with a recurrent policy (named MaxEnt-GRU) where no gains are observed by having a recurrent policy and the proposed objective function used to train \alg{} is crucial for learning optimal behaviors.

\subsection{Comparison across multiple trajectories}
\label{app:multiple_trajs}
\begin{figure}
    \centering
    \includegraphics[width=1.0\columnwidth]{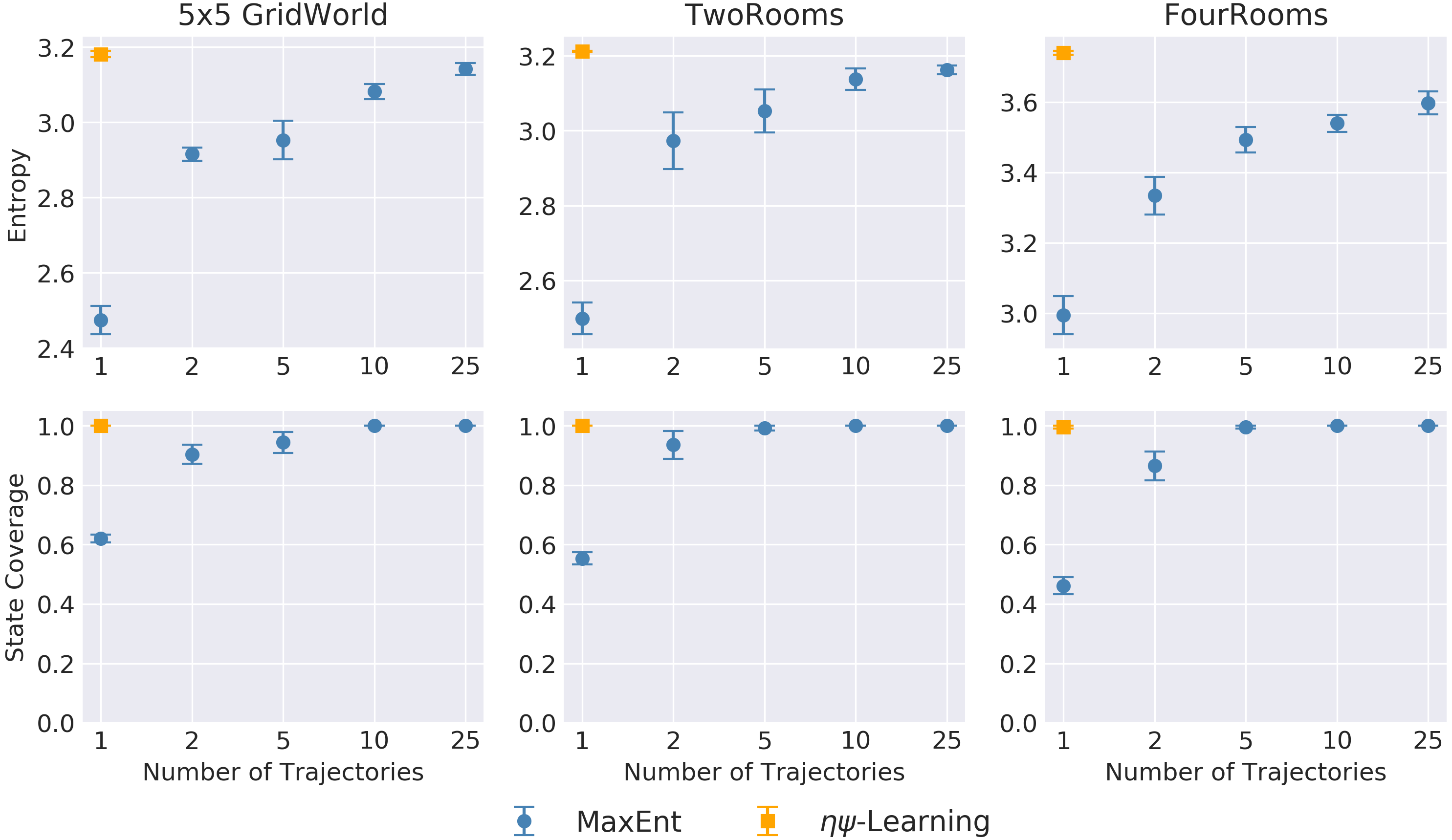}
    \caption{Comparison of \alg{} with baseline MaxEnt when metrics are computed using multiple trajectories. MaxEnt-X denotes the evaluation with X trajectories.}
    \label{fig:ablation_batch}
\end{figure}
Contemporary methods on Maximum State Entropy Exploration~\citep{pmlr-v97-hazan19a,mutti2021task} were evaluated by averaging the state visitation distribution over multiple trajectories.
In this work, we demonstrate that \alg{}~can achieve optimal behaviors over a single trajectory of finite length.
In this study, we also explore comparison of the baseline MaxEnt when evaluated over multiple trajectories. 
For this evaluation, we sample a batch of trajectories and then average the state visitation distribution of trajectories.
The metrics are then computed using this averaged visited state distribution.
In ~\autoref{fig:ablation_batch}, we compare the Entropy and State Coverage over this averaged distribution.
MaxEnt-X denotes the metric of MaxEnt after sampling X trajectories during evaluation.
The proposed method \alg was evaluated using a single trajectory.
We do not report metrics of \alg across multiple trajectories as we observed that the gains do not vanish with an increasing number of trajectories.
The metrics for the MaxEnt algorithm improve with the increasing number of trajectories used for evaluation.
With 10 or more trajectories, the baseline achieves optimal State Coverage.
However, \alg{}~achieves full coverage with a single trajectory demonstrating the efficiency of the exploration policies learned using the proposed method.
The baseline MaxEnt show similar behaviors by improving on the Entropy metric with more trajectories used for evaluation, whereas \alg{}~still outperforms the baseline when evaluated using a single trajectory.
This demonstrates that the proposed method explores the state-space with near-equal state visitations to maximize the entropy while having optimal state coverage in a single trajectory.

\subsection{Effect of the $\alpha$ parameter}
\begin{figure}[ht]
    \centering
    \includegraphics[width=1.0\textwidth]{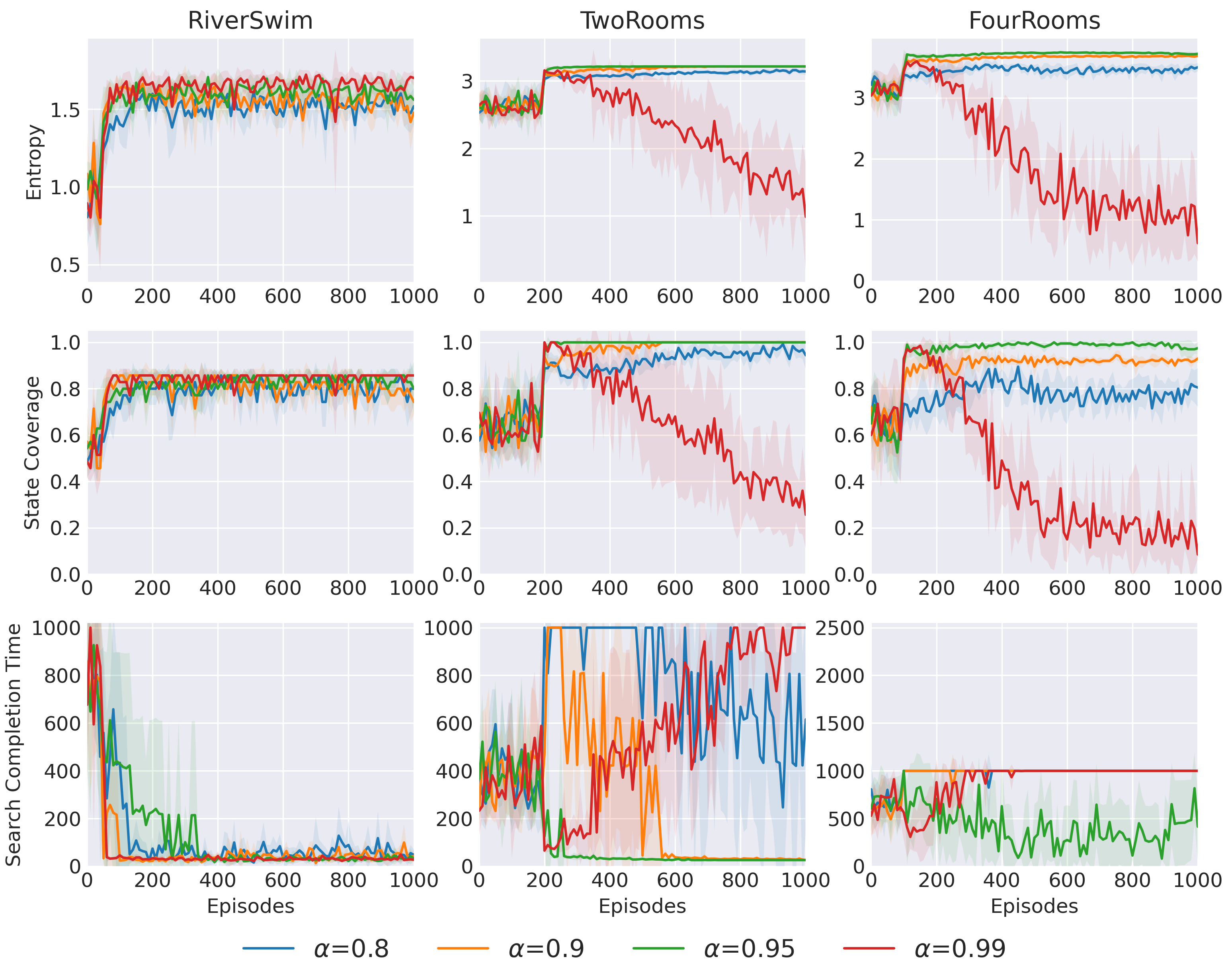}
    \caption{Evaluation with different value of $\alpha$ hyperparameter in the proposed $\gamma$-function.}
    \label{fig:ablation_alpha_2}
\end{figure}

We also study the effect of the hyper-parameter $\alpha$ of the $\gamma$-function (discussed in \autoref{app:discount_function}).
We note that $\alpha$ can be selected using the same method used to select the discount factor in the standard RL. 
We conducted experiments with $\alpha$=\{0.8,0.9,0.95,0.99\} across three environments- RiverSwim, TwoRooms, and FourRooms (\autoref{fig:ablation_alpha_2}). 
On the RiverSwim environment, all methods converged with similar values across all metrics. On the TwoRooms environments, agents with $\alpha$=\{0.8, 0.99\} were not performing well across the three metrics. 
Moreover, the convergence was slower for agent with $\alpha=0.9$ when compared with agent trained with $\alpha=0.95$. 
Our intuition behind this is that when $\alpha$ is smaller, the memory of the visited states in predecessor representation ($\eta$) reduces leading to a re-visitation of observed states. 
Whereas when $\alpha$ is large, then the agent does similarly discount a future state at any point in the trajectory. 
Lastly, on the FourRooms environment, the results are similar to the TwoRooms environment but with more pronounced differences in metrics for different values of $\alpha$. 
This is because FourRooms environment is harder to solve than TwoRooms environments. 
Notably, the Search Completion Time metric diverges for all values of $\alpha \neq 0.95$, which shows that $\alpha=0.95$ leads to optimal behavior for our task. 
For new tasks, we believe $\alpha$ can be tuned similarly to the discount factor used in standard RL.

\subsection{Visualization of trajectories}
\label{app:reacher_pusher_trajs}
The maneuvers taken by the learned \alg{} agent were also visualised. 
For the Reacher environment, the agent was first covering the faraway states followed by covering the nearby states to the central position.
Similar behaviors were observed for the Pusher environment, where the agent was moving the fingertip to different locations on the table top and tried visiting all reachable locations on the table. 
\begin{figure}[ht]
    \centering
    \includegraphics[width=1.0\textwidth]{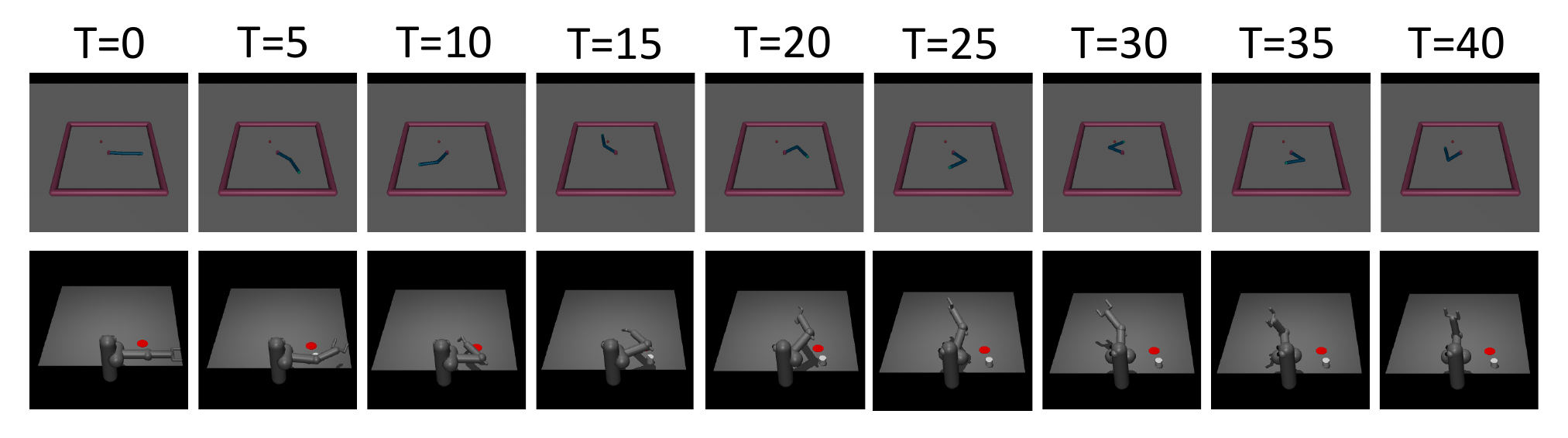}
    \caption{Rolled out trajectories using the learned \alg{} agent on the Reacher (top row) and Pusher (bottom row) environments at different time steps, respectively.}
    \label{fig:ablation_trajectory_reacher_pusher}
\end{figure}

\end{document}